\newtheorem*{rep@theorem}{\rep@title}
\newcommand{\newreptheorem}[2]{%
\newenvironment{rep#1}[1]{%
 \def\rep@title{#2 \ref{##1}}%
 \begin{rep@theorem}}%
 {\end{rep@theorem}}}
\newtheorem{theorem}{Theorem}[section]
\newtheorem{corollary}[theorem]{Corollary}
\newtheorem{lemma}[theorem]{Lemma}
\theoremstyle{definition}
\newtheorem{remark}[theorem]{Remark}
\newcommand{\eps}{\ensuremath{\epsilon}}
\renewcommand{\tilde}{\widetilde}
\renewcommand{\hat}{\widehat}
\newcommand{\R}{\mathbb{R}}
\newcommand{\E}{\mathbb{E}}
\newcommand{\cC}{\mathcal{C}}
\newcommand{\bS}{\mathbb{S}}
\DeclareMathOperator{\nnz}{nnz}
\newcommand{\norm}[1]{\left\|#1\right\|}
\newcommand{\normone}[1]{\norm{#1}_1}
\newcommand{\normtwo}[1]{\norm{#1}_2}
\newcommand{\fnorm}[1]{{\norm{#1}}_F}
\providecommand{\expect}[2]{\ensuremath{\ifthenelse{\equal{#1}{}}{\mathbb{E}}{\mathbb{E}_{#1}}\!\left[#2\right]}\xspace}
\providecommand{\prob}[2]{\ensuremath{\ifthenelse{\equal{#1}{}}{\Pr}{\Pr_{#1}}\!\left[#2\right]}\xspace}
\newenvironment{lp*}{\begin{equation*}  \begin{array}{lll}}{\end{array}\end{equation*}}
\newcommand{\abs}[1]{\left|{#1}\right|}
\DeclareMathOperator{\poly}{poly}
\DeclareMathOperator{\polylog}{polylog}
\DeclareMathOperator{\colspace}{colsp}
\DeclareMathOperator*{\argmin}{arg\,min}
\newcommand{\train}{\mathrm{train}}
\newcommand{\test}{\mathrm{test}}
\begin{document}

\title{Learning-Augmented Sketches for Hessians }
\author{Yi Li\footnote{Supported in part by Singapore Ministry of Education (AcRF) Tier 2 grant MOE2018-T2-1-013.}\\ 
	   School of Physical and Mathematical Sciences\\
	   Nanyang Technological University\\
	   \texttt{yili@ntu.edu.sg}
	   \and
	   Honghao Lin\footnote{Part of the work was done while the author was a visitng student at Shanghai University of Finance and Economics.}\\ 
	   Computer Science Department\\
Carnegie Mellon University\\
	   \texttt{honghaol@andrew.cmu.edu}
	   \and
	   David P. Woodruff\footnote{Supported in part by National Institute of Health grant 5R01HG 10798-2, Office of Naval Research grant N00014-18-1-2562, and a Simons Investigator Award}\\
	   Computer Science Department\\
	   Carnegie Mellon University\\
	   \texttt{dwoodruf@andrew.cmu.edu}
	   }
\maketitle

\begin{abstract}
Sketching is a dimensionality reduction technique where one compresses a matrix by linear combinations that are chosen at random. A line of work has shown how to sketch the Hessian to speed up each iteration in a second order method, but such sketches usually depend only on the matrix at hand, and in a number of cases are even oblivious to the input matrix. One could instead hope to learn a distribution on sketching matrices that is optimized for the specific distribution of input matrices. We show how to design learned sketches for the Hessian in the context of second order methods. We prove that a smaller sketching dimension of the column space of a tall matrix is possible, given an oracle that can predict the indices of the rows of large leverage score. We design such an oracle for various datasets, and this leads to a faster convergence of the well-studied iterative Hessian sketch procedure, which applies to a wide range of problems in convex optimization. We show empirically that learned sketches, compared with their ``non-learned'' counterparts, do improve the approximation accuracy for important problems, including LASSO and matrix estimation with nuclear norm constraints.
\end{abstract}

%!TeX root = main.tex
\section{Introduction}
Large-scale optimization problems are abundant and solving them efficiently requires powerful tools to make the computation practical. This is especially true of second order methods which often are less practical than first order ones. Although second order methods may have many fewer iterations, each iteration could involve inverting a large Hessian, which is cubic time; in contrast, first order methods such as stochastic gradient descent are linear time per iteration. 

In order to make second order methods faster in each iteration, a large body of work has looked at dimensionality reduction techniques, such as sampling, sketching, or approximating the Hessian by a low rank matrix. See, for example, \cite{GGR16,XYRRM16,pw16,pw17,dr18,GHRS18,rk19,GKLR19,KRMG19,XRM20,LWZ20}. Our focus is on sketching techniques, which often consist of multiplying the Hessian by a random matrix chosen independently of the Hessian. Sketching has a long history in theoretical computer science (see, e.g., \cite{W14} for a survey), and we describe such methods more below. A special case of sketching is sampling, which in practice is often uniform sampling, and hence oblivious to properties of the actual matrix. Other times the sampling is non-uniform, and based on squared norms of submatrices of the Hessian or on the leverage scores of the Hessian. 

Our focus is on sketching techniques, and in particular, we consider the framework of \cite{pw16,pw17} which introduces the iterative Hessian sketch and the Newton sketch, as well as the high accuracy refinement given in \cite{BPSW20}. If one were to run Newton's method to find a point where the gradient is zero, in each iteration one needs to solve an equation involving the current Hessian and gradient to find the update direction. When the Hessian can be decomposed as $A^\top A$ for an $n \times d$ matrix $A$ with $n \gg d$, then sketching is particularly suitable. The {\it iterative Hessian sketch} was proposed in \cite{pw16}, where $A$ is replaced with $S \cdot A$, for a random matrix $S$ which could be i.i.d. Gaussian or drawn from a more structured family of random matrices such as the Subsampled Randomized Hadamard Transforms or \textsc{Count-Sketch} matrices; the latter was done in \cite{CD19}. The {\it Newton sketch} was proposed by Pilanci and Wainwright~\cite{pw17}, which extended sketching
methods beyond constrained least-squares problems to any twice differentiable function subject to a closed convex constraint set. Using this sketch inside of interior point updates has led to much faster algorithms for an extensive body of convex optimization problems \cite{pw17}. By instead using sketching as a preconditioner, an application of the work of van den Brand et al.~\cite{BPSW20} (see their Appendix E) was able to improve the dependence on the accuracy parameter $\epsilon$ to logarithmic. 

In general, the idea behind sketching is the following. One chooses a random matrix $S$, drawn from a certain family of random matrices, and computes $SA$. If $A$ is tall-and-thin, then $S$ is short-and-fat, and thus $S A$ is a small, roughly square matrix. Moreover, $SA$ preserves important properties of $A$. One typically desired property is that $S$ is a subspace embedding, meaning that  $\|SAx\|_2 = (1 \pm \epsilon) \|Ax\|_2$ for all $x$ simultaneously. An observation exploited in \cite{CD19}, building off of the \textsc{Count-Sketch} random matrices $S$ introduced in randomized linear algebra in \cite{CW17}, is that if $S$ contains a single non-zero entry per column, then $SA$ can be computed in $O(\nnz(A))$ time, where $\nnz(A)$ denotes the number of nonzeros in $A$. This is also referred to as input-sparsity running time. 

Each iteration of a second order method often involves solving an equation of the form $A^\top Ax = A^\top b$, where $A^\top A$ is the Hessian and $b$ is the gradient. For a number of problems, one has access to a matrix $A \in \mathbb{R}^{n \times d}$ with $n \gg d$, which is also an assumption made in \cite{pw17}. Therefore, the solution $x$ is the minimizer to a constrained least squares regression problem:
\begin{equation}\label{eqn:constrained LS}
\min_{x\in \cC} \frac{1}{2}\norm{Ax-b}_2^2,
\end{equation}
where $\cC$ is a convex constraint set in $\R^d$. For the unconstrained case ($\cC=\R^d$), various classical sketches that attain the subspace embedding property can provably yield high-accuracy approximate solutions (see, e.g., \cite{sarlos06, NN13, Cohen16, CW17}); for the general constrained case, the Iterative Hessian Sketch (IHS) was proposed by Pilanci and Wainwright~
\cite{pw16} as an effective approach and Cormode and Dickens~
\cite{CD19} employed sparse sketches to achieve input-sparsity running time for IHS. All sketches used in these results are data-oblivious random sketches.

\paragraph{Learned Sketching.}
In the last few years, an exciting new notion of {\it learned sketching} has emerged. Here the idea is that one often sees independent samples of matrices $A$ from a distribution $\mathcal{D}$, and can train a model to learn the entries in a sketching matrix $S$ on these samples. When given a future sample $B$, also drawn from $\mathcal{D}$, the learned sketching matrix $S$ will be such that $S \cdot B$ is a much more accurate compression of $B$ than if $S$ had the same number of rows and were instead drawn without knowledge of $\mathcal{D}$. Moreover, the learned sketch $S$ is often {\it sparse}, therefore allowing $S \cdot B$ to be applied very quickly. For large datasets $B$ this is particularly important, and distinguishes this approach from other transfer learning approaches, e.g., \cite{a16}, which can be considerably slower in this context.

Learned sketches were first used in the data stream context for finding frequent items \cite{HIKV19} and have subsequently been applied to a number of other problems on large data. For example, Indyk et al.~
\cite{IVY19} showed that learned sketches yield significantly smaller errors for low rank approximation. \cite{DIRW20} made significant improvements to nearest neighbor search using learned sketches. More recently, Liu et al.~
\cite{LLVWW20} extended learned sketches to several problems in numerical linear algebra, including least-squares regression, as well as $k$-means clustering.

Despite the number of problems that learned sketches have been applied to, they have not been applied to convex optimization in general. Given that such methods often require solving a large overdetermined least squares problem in each iteration, it is hopeful that one can improve each iteration using learned sketches. However, a number of natural questions arise: (1) {\it how should we learn the sketch?} (2) {\it should we apply the same learned sketch in each iteration, or learn it in the next iteration by training on a data set involving previously learned sketches from prior iterations?}

\paragraph{Our Contributions.}
In this work we answer the above questions and develop the first framework of learned sketching that applies to a wide number of problems in convex optimization. Namely, we apply learned sketches to constrained least-squares problems, including LASSO and matrix regression with nuclear norm constraints. We show empirically that learned sketches demonstrate superior accuracy over classical oblivious random sketches for each of these problems.  All of our learned sketches $S$ are extremely sparse, meaning that they contain a single non-zero entry per column and that they can be applied in input-sparsity time. For such sketches, there are two things to learn: the position of the non-zero entry in each column and the value of the non-zero entry. 

Following the previous work of~\cite{IVY19}, we choose the position of the nonzero entry in each column to be uniformly random, while the value of the nonzero entry is learned (the value is no longer limited to $-1$ and $1$). Here we consider a new learning objective, that is, we optimize the subspace embedding property of the sketching matrix instead of optimizing the error in the objective function of the optimization problem we are trying to solve. This demonstrates a significant advantage over non-learned sketches, and has a fast training time. Our experiments show that the convergence rate is reduced by $44\%$ over the nonlearned \textsc{Count-Sketch} (a classical extremely sparse sketch) for the LASSO problem on a real-world dataset. Recall that a smaller convergence rate means a faster convergence.

We prove theoretically that $S$ can take fewer rows, with optimized positions of nonzero entries, when the input matrix $A$ has a small number of rows of heavy leverage score. More specifically, \textsc{Count-Sketch} takes $O(d^2/(\delta\eps^2))$ rows with failure probability $\delta$, while our $S$ requires only $O((d\polylog(1/\eps)+\log(1/\delta))/\eps^2)$ rows if $A$ has at most $d\polylog(1/\eps)/\eps^2$ rows of leverage score at least $\eps/d$. This is a quadratic improvement in $d$ and an exponential improvement in $\delta$. Applying $S$ to $A$ runs in input-sparsity time and the resulting $SA$ may remain sparse if $A$ is sparse. In practice, it is not necessary to calculate the leverage scores. Instead, we show in our experiments that the indices of the rows of heavy leverage score can be learned and the induced $S$ achieves a comparable accuracy for the abovementioned LASSO problem to classical dense sketches such as Gaussian matrices.

Combining both aspects, the value of the nonzero entry and the indices of the rows of heavy leverage score, we obtain even better learned sketches. For the same LASSO problem, we show empirically that such learned sketches reduce the convergence rate by a larger $79.9\%$ to $84.6\%$ over non-learned sketches.
Therefore, the learned sketches attain a smaller error within the same number of iterations, and in fact, within the same limit on the maximum runtime, since our sketches are extremely sparse.

We also study the general framework of convex optimization in~\cite{BPSW20}, and show that also for sketching-based preconditioning, learned sketches demonstrate considerable advantages. More precisely, by using a learned sketch with the same number of rows as an oblivious sketch, we are able to obtain a much better preconditioner with the same overall running time. 

%!TeX root = main.tex
\section{Preliminaries}\label{sec:prelim}

\paragraph{Notation.} We denote by $\bS^{n-1}$ the unit sphere in the $n$-dimensional Euclidean space $\R^n$. For a matrix $A\in \R^{m\times n}$ we denote by $\norm{A}_{op}$ its operator norm, which is defined as $\norm{A}_{\mathrm{op}} = \sup_{x\in \bS^{n-1}} \norm{Ax}_2$. We also denote by $\sigma_{\max}(A)$ and $\sigma_{\min}(A)$ the largest and smallest singular values of $A$, respectively, and by $\colspace(A)$ the column space of $A$. The condition number of $A$ is defined to be $\kappa(A) = \sigma_{\max}(A)/\sigma_{\min}(A)$.

\paragraph{Leverage Scores.} We only consider matrices of full column rank\footnote{This can be assumed w.l.o.g. by adding artbirarily small random noise to the input, or one can first quickly use sketching to find a subset of columns of maximum rank, and replace the inut with that subset of columns.}. Suppose that $A\in \R^{m\times n}$ ($m\geq n$) has full column rank. It has $m$ leverage scores, denoted by $\tau_1(A),\dots,\tau_m(A)$, which are defined as $\tau_i(A) = \|e_i^\top A(A^\top A)^{-1}A^\top\|_2^2$, where $\{e_1,\dots,e_m\}$ is the canonical basis of $\R^m$. Equivalently, letting $A = U\Sigma V^\top$ be the singular value decomposition of $A$, where $U\in \R^{m\times n}$, $\Sigma, V\in\R^{n\times n}$, we can also write $\tau_i(A) = \|e_i^\top UU^\top\|_2^2 = \|e_i^\top U\|_2^2$, which is the squared $\ell_2$ norm of the $i$-th row of $U$.

\paragraph{Classical Sketches.} Below we review several classical sketches that have been used for solving optimization problems. 
\begin{itemize}[leftmargin=20pt,topsep=0pt,itemsep=0pt]
	\item Gaussian sketch: $S = \frac{1}{\sqrt m}G$, where $G\in \R^{m\times n}$ with i.i.d.\ $N(0,1)$ entries.
	\item \textsc{Count-Sketch}: Each column of $S$ has only a single non-zero entry. The position of the non-zero entry is chosen uniformly over the $m$ entries in the column and the value of the entry is either $+1$ or $-1$, each with probability $1/2$. Further, the columns are chosen independently.
	\item Sparse Johnson-Lindenstrauss Transform (SJLT): $S$ is the vertical concatenation of $s$ independent \textsc{Count-Sketch} matrices, each of dimension $m/s\times n$.
\end{itemize}

\paragraph{\textsc{Count-Sketch}-type Sketch.} A \textsc{Count-Sketch}-type sketch is characterized by a tuple $(m,n,p,v)$, where $m, n$ are positive integers and $p,v$ are $n$-dimensional real vectors, defined as follows. The sketching matrix $S$ has dimensions $m\times n$ and $S_{p_i,i}=v_i$ for all $1\leq i\leq n$, while all the other entries of $S$ are $0$. When $m$ and $n$ are clear from context, we may characterize such a sketching matrix by $(p,v)$ only.

\paragraph{Subspace Embeddings.} For a matrix $A\in \R^{n\times d}$, we say a matrix $S\in \R^{m\times n}$ is a $(1\pm\eps)$-subspace embedding for the column span of $A$ if $(1-\eps)\norm{Ax}_2\leq \norm{SAx}_2 \leq (1+\eps)\norm{Ax}_2$ for all $x\in \R^d$. 
The classical sketches above, with appropriate parameters, are all subspace embedding matrices with probability at least $1 - \delta$; our focus is on \textsc{Count-Sketch} which can be applied in input sparsity running time. We summarize the parameters needed for a subspace embedding below:
\begin{itemize}[leftmargin=20pt,topsep=0pt,itemsep=0pt]
	\item Gaussian sketch: $m=O((d + \log(1/\delta))/\eps^2)$. It is a dense matrix and computing $SA$ costs $O(m\cdot\nnz(A)) = O(\nnz(A)(d + \log(1/\delta))/\eps^2)$ time.
	\item \textsc{Count-Sketch}: $m = O(d^2/(\delta\eps^2))$~\cite{CW17}. Though the number of rows is quadratic in $d/\eps$, the  matrix $S$ is sparse and computing $SA$ takes only $O(\nnz(A))$ time.
	\item SJLT: $m=O(d \log(d/\delta) /\eps^2)$ and has $s=O(\log(d/\delta)/\eps)$ non-zeros per column~\cite{NN13,Cohen16}. Computing $SA$ takes $O(s \nnz(A)) = O(\nnz(A) \log(d/\delta)/\eps)$ time.
\end{itemize}

\paragraph{Iterative Hessian Sketch.} The Iterative Hessian Sketching (IHS) method~\cite{pw16} solves the constrained least-squares problem \eqref{eqn:constrained LS} by iteratively performing the update
\begin{equation}\label{eqn:IHS update}
x_{t+1} = \argmin_{x\in\mathcal{C}} \left\{ \frac{1}{2}\norm{S_{t+1} A(x-x_t)}_2^2 - \langle A^\top(b-Ax_t), x-x_t\rangle \right\},
\end{equation} 
where $S_{t+1}$ is a sketching matrix. It is not difficult to see that for the unsketched version ($S_{t+1}$ is the identity matrix) of the minimization above, the optimal solution $x^{t+1}$ coincides with the optimal solution to the constrained least squares problem~\eqref{eqn:constrained LS}. The IHS approximates the Hessian $A^\top A$ by a sketched version $(S_{t+1}A)^\top (S_{t+1}A)$ to improve runtime, as $S_{t+1}A$ typically has very few rows.

\paragraph{Unconstrained Convex Optimization.} Consider an unconstrained convex optimization problem $\min_x f(x)$, where $f$ is smooth and strongly convex, and its Hessian $\nabla^2 f$ is Lipschitz continuous. This problem can be solved by Newton's method, which iteratively performs the update
\begin{equation}\label{eqn:newton_update}
x_{t+1} = x_t - \argmin_z \left\| (\nabla^2 f(x_t)^{1/2})^\top(\nabla^2 f(x_t)^{1/2})z - \nabla f(x_t) \right\|_2,
\end{equation}
provided it is given a good initial point $x_0$. In each step, it requires solving a regression problem of the form $\min_z \norm{A^\top Az-y}_2$, which, with access to $A$, can be solved with a fast regression solver in \cite{BPSW20}. The regression solver first computes a preconditioner $R$ via a QR decomposition such that $SAR$ has orthonormal columns, where $S$ is a sketching matrix, then solves $\hat z =\argmin_{z'} \norm{(AR)^\top(AR)z'-y}_2$ by gradient descent and returns $R\hat z$ in the end. Here, the point of sketching is that the QR decomposition of $SA$ can be computed much more efficiently than the QR decomposition of $A$, since $S$ has only a small number of rows.

\paragraph{Learning a Sketch.} We use the same learning algorithm in~\cite{LLVWW20}, given in Algorithm~\ref{alg:train_hessian_regression}. The algorithm aims to minimize the mean loss function $\mathcal{L}(S,\mathcal{A}) = \frac{1}{N}\sum_{i=1}^N \mathcal{L}(S,A_i)$, where $S$ is the learned sketch, $\mathcal{L}(S,A)$ is the loss function of $S$ applied to a data matrix $A$, and $\mathcal{A} = \{A_1,\dots,A_N\}$ is a (random) subset of training data.

\begin{algorithm}[t]
\caption{\textsc{Learn-Sketch}: Gradient descent algorithm for learning the sketch values}
 \label{alg:train_hessian_regression}\label{alg:gd}
\begin{algorithmic}[1]
	\Require $\mathcal{A}_{\text{train}} = \{A_i\}_{i=1}^N$ ($A_i \in \R^{n \times d}$), learning rate $\alpha$
	\State Randomly initialize $p, v$ for a \textsc{Count-Sketch}-type sketch as described in the text
	\For{$t = 0$ {\bf to} ${\mathrm{step}}$}
  		\State Form $S$ using $p, v$
  		\State Sample batch $\mathcal{A}_{batch}$ from $\mathcal{A}_{train}$
  		\State $v \gets v - \alpha \frac{\partial \mathcal{L}(S, \mathcal{A}_{batch})}{\partial v}$
  	\EndFor
\end{algorithmic}
\end{algorithm}
%!TeX root = main.tex
\section{Learning-Augmented Subspace Embeddings}
\label{sec:learn_subspace_embedding}
In this section we explain two ways to optimize the subspace embedding property of the sketching matrix. One is to optimize the non-zero positions of the \textsc{Count-Sketch}-type sketch, based on a trained oracle to identify a superset of the rows of large leverage score. The other is to optimize the values of the nonzero entries, which may no longer be $-1$ or $1$, via a learning algorithm based on gradient descent. As we shall see in Section~\ref{sec:hessian_sketch} and~\ref{sec:hessian_regression}, a better subspace embedding implies a better convergence rate in the IHS, as well as for the subroutine in unconstrained convex optimization.

\subsection{Sketched Learning: Optimizing the Positions}\label{sec:using_oracle}
In this section we consider the problem of embedding the column space of a matrix $A\in\R^{n\times d}$, provided that $A$ has a few rows of large leverage score, as well as access to an oracle which reveals a \emph{superset} of the indices of such rows. Formally, let $\tau_i(A)$ denote the leverage score of the $i$-th row of $A$ and let
\[
I^\ast = \left\{i: \tau_i(A) \geq \nu \right\}
\]
be the set of rows with large leverage score. Suppose that a superset $I\supseteq I^\ast$ is known to the algorithm. In the experiments we train an oracle to predict such rows. We can maintain all rows in $I$ explicitly and apply a \textsc{Count-Sketch} to the remaining rows, i.e., the rows in $[n]\setminus I$. Up to permutation of the rows, we can write
\begin{equation}\label{eqn:S}
A = \begin{pmatrix} A_I \\ A_{I^c}\end{pmatrix} \quad \text{and}\quad S = \begin{pmatrix} I & 0 \\ 0 & S' \end{pmatrix},
\end{equation}
where $S'$ is a random \textsc{Count-Sketch} matrix of $m$ rows. Clearly $S$ has a single non-zero entry per column. We have the following theorem, whose proof is postponed to Section~\ref{sec:embedding_oracle}. Intuitively, the proof for \textsc{Count-Sketch} in~\cite{CW17} handles rows of large leverage score and rows of small leverage score separately. The rows of large leverage score are to be perfectly hashed while the rows of small leverage score will concentrate in the sketch by the  Hanson-Wright inequality. 

\begin{theorem}\label{thm:oracle_subspace_embedding}
Let $\nu = \eps/d$. Suppose that
$m = O((d/\eps^2)(\polylog(1/\eps) + \log(1/\delta)))$, $\delta \in (0,1/m]$ and $d = \Omega((1/\eps)\polylog(1/\eps)\log^2(1/\delta)) $. Then, there exists a distribution on $S$ of the form in (\ref{eqn:S}) with $m + |I|$ rows such that 
\[
\Pr\big\{ \forall x\in\colspace(A), \abs{\norm{Sx}_2^2 - \norm{x}_2^2} > \eps\norm{x}_2^2 \big\} \leq \delta
.
\]
\end{theorem}
Hence, if there happen to be at most $d\polylog(1/\eps)/\eps^2$ rows of leverage score at least $\eps/d$, the overall sketch length for embedding $\colspace(A)$ can be reduced to $O((d\polylog(1/\eps)+\log(1/\delta))/\eps^2)$, a quadratic improvement in $d$ and an exponential improvement in $\delta$ over the original sketch length of $O(d^2/(\eps^2\delta))$ for \textsc{Count-Sketch}. In the worst case there could be $O(d^2/\epsilon)$ such rows, though empirically we do not observe this. The following is an immediate corollary, by setting $\delta = 1/m$.

\begin{corollary}\label{cor:oracle_subspace_embedding}
Suppose that $d = \Omega((1/\eps)\polylog(1/\eps))$ and $|I| = O((d/\eps^2)\polylog(d/\eps))$ with $\nu = \eps/d$. There exists a distribution on $S$ of the form in (\ref{eqn:S}) with $O((d/\eps^2)\polylog(d/\eps))$ rows such that 
\[
\Pr\big\{ \forall x\in\colspace(A), \abs{\norm{Sx}_2^2 - \norm{x}_2^2} > \eps\norm{x}_2^2 \big\} \leq \eps^3
.
\]
\end{corollary}

We remark that our $S$ is of the \textsc{Count-Sketch} type, which has a twofold benefit. First, $SA$ can be applied in $O(\nnz(A))$ time. This is faster than a chained subspace embedding of the form $S_2S_1A$, where $S_1$ is a \textsc{Count-Sketch} matrix of $O(d^2/\eps^2)$ rows and $S_2$ is a subspace embedding matrix of $O(d/\eps^2)$ rows. Computing $S_1A$ takes $O(\nnz(A))$ time but computing $S_2(S_1A)$ will take an additional time of $\poly(d/\eps)$ or $O(\nnz(S_1A)\log(d)/\eps)$. The latter terms can be quite large and even comparable to $n$ if say, $n$ is close to $d^2$. Second, our $S$ allows the sketched matrix $SA$ to be sparse when $A$ is sparse, while the other designs such as Subsampled Randomized Hadamard Transforms and Sparse Johnson-Lindentrauss Transforms either would not guarantee that $SA$ is sparse, or would yield a worse sparsity than a matrix of the \textsc{Count-Sketch} type. The sparsity of $SA$ is also important for solving regression problems involving $B = SA$ in intermediate steps, as algorithms such as conjugate gradient, which use matrix-vector products, become more efficient.

We note that approximate leverages scores of all rows can be found in time $O(\nnz(A)\log n + \poly(d/ \eps))$~\cite{CW17}. Hence, one can approximate the leverage score of every row in a preprocessing step before running the IHS. This time will be amortized by the IHS iterations, because the matrix $A$ remains the same throughout the process. Moreover, in Section~\ref{sec:exp}, we show that for a number of real-world datasets, it is possible to learn the indices of the heavy rows. In practice, one can shrink the size of the superset $I$ by restricting $I$ to the rows with large $\ell_2$ norms in $A_I$. We shall demonstrate in Section~\ref{sec:exp} that this heuristic works well on some real-world datasets.

\subsection{Sketched Learning: Optimizing the Values}
\label{sec:optimizing_value}

As mentioned in Section~\ref{sec:prelim}, when we fix the positions of the non-zero entries, we aim to optimize the values by gradient descent. We propose the following objective loss function for the learning algorithm
\[
\mathcal{L}(S, A_i) = \|(A_i R_i)^\top A_i R_i - I\|_F,
\]
over all the training data, where $R_i$ comes from the QR-decomposition of $SA_i = Q_i R_i^{-1}$. We found empirically that not squaring this loss function works better than squaring it. 
We think one of the reasons is that the version without squaring may be less sensitive to  outliers. The intuition for this loss function is given by the lemma below, whose proof is deferred to Section~\ref{sec:eps_subspace_proof}.
\begin{lemma}
\label{lem:eps_subspace}
Suppose that $\eps\in(0,\frac{1}{2})$, $S \in \mathbb{R}^{m \times n}$, $A \in \mathbb{R}^{n \times d}$ has full column rank, and $SA=QR$ is the QR-decomposition of $SA$. If
$
\|(AR^{-1})^\top AR^{-1} - I\|_{\mathrm{op}} \le \eps
$, 
then $S$ is a $(1 \pm \eps)$-subspace embedding of the column space of $A$.
\end{lemma}
Lemma~\ref{lem:eps_subspace} implies that if the loss function over $\mathcal{A}_{\mathrm{train}}$ is small and the distribution of $\mathcal{A}_{\mathrm{test}}$ is similar to $\mathcal{A}_{\mathrm{train}}$, it is reasonable to expect that $S$ is a good subspace embedding of $\mathcal{A}_{\mathrm{test}}$. Here we use the Frobenius norm rather than operator norm in the loss function because it will make the optimization problem easier to solve, and our empirical results also show that the performance of the Frobenius norm is better than that of the operator norm.
%!TeX root = main.tex

\begin{algorithm}[t]
\caption{Solver for \eqref{eqn:hessian_sketch}}
\label{alg:hessian_sketch}
\begin{algorithmic}[1]
	\State $S_1\gets $ learned sketch, $S_2\gets $ random sketch
	\State $(\hat Z_{i,1},\hat Z_{i,2}) \gets \Call{Estimate}{S_i,A}$, $i=1,2$
	\State $i^\ast \gets \arg\min_{i=1,2} (\hat Z_{i,2}/\hat Z_{i,1})$
	\State $\hat x\gets $ solution of~\eqref{eqn:hessian_sketch} with $S=S_{i^\ast}$
	\State \Return $\hat x$
	\vspace{2pt}
	\hrule
	\vspace{2pt}
	\Function{Estimate}{$S,A$}
		\State $T \gets $ sparse $(1\pm\eta)$-subspace embedding \newline
		\hspace*{7ex} matrix for $d$-dimensional subspaces
		\State $(Q,R)\gets \Call{QR}{TA}$
		\State $\hat Z_1\gets \sigma_{\min}(SAR^{-1})$
		\State $\hat Z_2\gets (1\pm\eta)$-approximation to \newline \hspace*{9ex} $\norm{(SAR^{-1})^\top (SAR^{-1})-I}_{\mathrm{op}}$
		\State \Return $(\hat Z_1,\hat Z_2)$
	\EndFunction
\end{algorithmic}
\end{algorithm}

\section{Hessian Sketch}
\label{sec:hessian_sketch}
In this section, we consider the minimization problem
\begin{equation}\label{eqn:hessian_sketch}
\min_{x\in \mathcal C}\left\{ \frac{1}{2}\norm{SAx}_2^2 - \langle A^\top y,x\rangle\right\},
\end{equation}
which is used as a subroutine for the IHS (cf.~\eqref{eqn:IHS update}). We present an algorithm with the learned sketch in Algorithm~\ref{alg:hessian_sketch}. To analyze its performance, we define the following quantities (corresponding exactly to the unconstrained case in~\cite{pw16})
\[
Z_1(S) = \inf_{v\in\colspace(A)\cap \bS^{n-1}} \norm{Sv}_2^2,\quad
 Z_2(S) =  \sup_{u,v\in \colspace(A)\cap \bS^{n-1}}  \left\langle u, (S^\top S-I_n)v \right\rangle.
 \]
When $S$ is a $(1+\eps)$-subspace embedding of $\colspace(A)$, we have $Z_1(S) \geq 1-\eps$ and $Z_2(S) \leq 2\eps$.

For a general sketching matrix $S$, the following is the approximation guarantee of $\hat Z_1$ and $\hat Z_2$, which are estimates of $Z_1(S)$ and $Z_2(S)$, respectively. The proof is postponed to Appendix~\ref{sec:z_estimation}. The main idea is that $AR^{-1}$ is well-conditioned, where $R$ is as calculated in Algorithm~\ref{alg:hessian_sketch}.

\begin{lemma} \label{lem:z_estimation}
Suppose that $\eta\in (0,\frac{1}{3})$ is a small constant, $A$ is of full rank and $S$ has $\poly(d/\eta)$ rows.
The function $\Call{Estimate}{S,A}$ returns in $O((\nnz(A)\log\frac{1}{\eta}+\poly(\frac{d}{\eta}))$ time $\hat Z_1,\hat Z_2$ which with probability at least $0.99$ satisfy that $\frac{Z_1(S)}{1+\eta}\leq \hat Z_1\leq \frac{Z_1(S)}{1-\eta}$ and $\frac{Z_2(S)}{(1+\eta)^2}-3\eta\leq \hat Z_2\leq \frac{Z_2(S)}{(1-\eta)^2}+3\eta$.
\end{lemma}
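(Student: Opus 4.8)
The plan is to recast both $Z_1(S)$ and $Z_2(S)$ as spectral quantities of the small matrix $SU$, where $U\in\R^{n\times d}$ is an orthonormal basis for $\mathcal{R}=\colspace(A)$ (well-defined since $A$ has full rank), and then to have the algorithm compute good approximations to these quantities without ever forming $U$ explicitly. Every unit vector $v\in\mathcal{R}$ equals $Uz$ for a unique unit $z$, and likewise for $u$, so substituting into the definitions gives $Z_1(S)=\inf_{\norm{z}_2=1}\norm{SUz}_2^2=\sigma_{\min}^2(SU)$ and $Z_2(S)=\sup_{\norm{a}_2=\norm{b}_2=1}a^\top\big((SU)^\top(SU)-I_d\big)b=\big\|(SU)^\top(SU)-I_d\big\|_2=\max_i\abs{\sigma_i^2(SU)-1}$. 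Hence it suffices to approximate the squared singular values $\sigma_1^2(SU),\dots,\sigma_d^2(SU)$ to within a $(1\pm\eta)$ factor each.

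Forming $U$ exactly costs $O(nd^2)$, which is too slow, so instead the algorithm computes a $(1\pm\tfrac{\eta}{3})$-subspace embedding $T$ for $\mathcal{R}$ with $\poly(d/\eta)$ rows (for instance a \textsc{Count-Sketch}, applied in $O(\nnz(A))$ time, with $O(\log\tfrac1\eta)$ independent repetitions to drive the failure probability below $1/100$), forms $TA$, takes a thin QR decomposition $TA=Q_TR_T$ in $\poly(d/\eta)$ time, and sets $W\eqdef AR_T^{-1}$. Since $TW=Q_T$ has orthonormal columns and $Wy\in\mathcal{R}$ for every $y$, the embedding property gives $\norm{Wy}_2\in\big[\tfrac{\norm{y}_2}{1+\eta/3},\tfrac{\norm{y}_2}{1-\eta/3}\big]$, i.e.\ $W$ is a near-isometry on $\R^d$. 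The algorithm then forms $SW=(SA)R_T^{-1}$, an $O(d^2)\times d$ matrix by the hypothesis on $S$ (computing $SA$ is $O(\nnz(A))$ since $S$ has one nonzero per column, and the product is $\poly(d)$), reads off its squared singular values $\hat\lambda_1\ge\cdots\ge\hat\lambda_d$ from an eigendecomposition of the $d\times d$ Gram matrix $(SW)^\top(SW)$ in $\poly(d)$ time, and returns $\hat Z_1=\hat\lambda_d$ and $\hat Z_2=\max_i\abs{\hat\lambda_i-1}$. Adding up the costs gives the claimed $O(\nnz(A)\log\tfrac1\eta+\poly(\tfrac d\eta))$ running time.

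For correctness, I relate $\sigma_i(SW)$ to $\sigma_i(SU)$ through the matrix $M\eqdef W^{+}U$: then $U=WM$, $M$ is invertible, and $I_d=U^\top U=M^\top(W^\top W)M$ gives $(W^\top W)^{-1}=MM^\top$, so the singular values of $M$ are exactly the reciprocals of those of $W$ and hence lie in $\big[1-\tfrac{\eta}{3},1+\tfrac{\eta}{3}\big]$. Since $SU=(SW)M$, the standard bound $\sigma_i(XD)\in[\sigma_{\min}(D)\,\sigma_i(X),\sigma_{\max}(D)\,\sigma_i(X)]$ for invertible $D$ (Ostrowski's inequality, or directly via Courant--Fischer after the change of variable $z=Dy$) gives $\sigma_i(SU)\in\big[(1-\tfrac{\eta}{3})\sigma_i(SW),(1+\tfrac{\eta}{3})\sigma_i(SW)\big]$, hence $\hat\lambda_i=\sigma_i^2(SW)\in\big[\tfrac{\sigma_i^2(SU)}{1+\eta},\tfrac{\sigma_i^2(SU)}{1-\eta}\big]$ using $(1\pm\tfrac{\eta}{3})^2$ against $1\pm\eta$ for $\eta<\tfrac13$. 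Minimizing over $i$ yields $\tfrac{Z_1(S)}{1+\eta}\le\hat Z_1\le\tfrac{Z_1(S)}{1-\eta}$. For $\hat Z_2$, the upper bound follows from $\abs{\hat\lambda_i-1}\le\abs{\hat\lambda_i-\sigma_i^2(SU)}+\abs{\sigma_i^2(SU)-1}$ together with $\sigma_i^2(SU)\in[1-Z_2(S),1+Z_2(S)]$ and $\tfrac{\eta}{1-\eta}\le 3\eta$; the matching lower bound comes from plugging in the index that attains $Z_2(S)=\abs{\sigma_i^2(SU)-1}$. This gives $\tfrac{Z_2(S)}{(1+\eta)^2}-3\eta\le\hat Z_2\le\tfrac{Z_2(S)}{(1-\eta)^2}+3\eta$.

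I expect the middle step to be the main obstacle: one must convert the subspace-embedding guarantee, which a priori only controls $\norm{TAx}_2$, into the near-isometry statement for $W=AR_T^{-1}$, and then track the $(1\pm\tfrac{\eta}{3})$ distortion through the one-sided factor $SU=(SW)M$ carefully enough that the output bounds land exactly on $Z_1(S)/(1\pm\eta)$ with \emph{no} square and on $Z_2(S)/(1\pm\eta)^2$ with an additive $3\eta$ slack. The reduction in the first step and the runtime accounting are routine by comparison; the delicate part is choosing the embedding distortion (here $\eta/3$) and absorbing the various $(1\pm\eta/3)$ and $\tfrac{1}{1-\eta}$ factors so that all the constants close up.
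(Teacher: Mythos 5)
Your proposal is correct and is essentially the paper's own argument: both construct $W=AR^{-1}$ from a sparse subspace embedding $T$ and a QR decomposition of $TA$, use that $W$ is a $(1\pm O(\eta))$-isometry on $\R^d$, and read $\hat Z_1,\hat Z_2$ off the small matrix $SAR^{-1}$, with your singular-value-wise (Ostrowski) bookkeeping playing the role of the paper's operator-norm triangle inequality and yielding the same bounds. The only cosmetic differences are that you keep the squared quantity $\sigma_{\min}^2(SAR^{-1})$ (faithful to the squared definition of $Z_1$, whereas the paper's proof silently drops the square) and compute the $d\times d$ Gram eigendecomposition exactly rather than using the paper's power-method estimate; also, the remark about $O(\log\frac{1}{\eta})$ independent repetitions of $T$ is unnecessary (and not a valid amplification without a verification step), since a single sparse embedding with $\poly(d/\eta)$ rows already achieves failure probability $0.01$.
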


Similar to Proposition 1 of \cite{pw16}, we have the following guarantee. The proof is postponed to Appendix~\ref{sec:ihs_proof}.
\begin{theorem}\label{thm:ihs}
Let $\eta\in (0,\frac{1}{3})$ be a small constant. 
Suppose that $A$ is of full rank and $S_1$ and $S_2$ are both \textsc{Count-Sketch}-type sketches with $\poly(d/\eta)$ rows. Algorithm~\ref{alg:hessian_sketch} returns a solution $\hat x$ which, with probability at least $0.98$, satisfies that $\norm{A(\hat{x}-x^\ast)}_2\leq (1+\eta)^4\Big(\min\big\{\frac{\hat Z_{1,2}}{\hat Z_{1,1}},\frac{\hat Z_{2,2}}{\hat Z_{2,1}}\Big\}+4\eta\Big)\norm{Ax^\ast}_2$ in $O(\nnz(A)\log(\frac{1}{\eta}) + \poly(\frac{d}{\eta}))$ time, where $x^\ast = \argmin_{x\in\mathcal{C}} \norm{Ax-b}_2$ is the least-squares solution.
\end{theorem}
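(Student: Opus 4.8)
The plan is to combine a deterministic, ``one-sketch'' error bound (the unconstrained version of Proposition~1 of \cite{pw16}, which is the variant matching the definitions of $Z_1,Z_2$ adopted here) with the estimation guarantee of Lemma~\ref{lem:z_estimation}, using that Algorithm~\ref{alg:hessian_sketch} can afford to run $\Call{Estimate}{\cdot,A}$ on both candidate sketches and then solve \eqref{eqn:hessian_sketch} with whichever gives the smaller estimated ratio.

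First I would establish the deterministic statement: for any \textsc{Count-Sketch}-type sketch $S$ with $Z_1(S)>0$, the minimizer $\hat x$ of \eqref{eqn:hessian_sketch} (with $y=b$, i.e.\ the IHS step from the origin) over $\mathcal{C}$ satisfies $\norm{A(\hat x - x^\ast)}_2 \le \frac{Z_2(S)}{Z_1(S)}\norm{Ax^\ast}_2$. The argument is the two-variational-inequality trick: write first-order optimality for $\hat x$ against the gradient $(SA)^\top SA\,\hat x - A^\top b$ and for $x^\ast$ against $A^\top Ax^\ast - A^\top b$, test each inequality at the other point, add them, and set $\Delta=\hat x-x^\ast$; this gives $\norm{SA\Delta}_2^2 \le \langle (I_n-S^\top S)Ax^\ast,\, A\Delta\rangle$. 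Writing $A\Delta = \norm{A\Delta}_2 v$ and $Ax^\ast=\norm{Ax^\ast}_2 u$ with $u,v\in\mathcal{R}\cap\bS^{n-1}$, the left side is at least $Z_1(S)\norm{A\Delta}_2^2$, and — using that $S^\top S-I_n$ is symmetric and that $-u$ is also on the sphere — the right side is at most $Z_2(S)\norm{Ax^\ast}_2\norm{A\Delta}_2$; dividing yields the bound. This step is purely deterministic.

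Next I would pass from the true quantities to the estimated ones. The algorithm runs $\Call{Estimate}{S_1,A}$ and $\Call{Estimate}{S_2,A}$ to get $(\hat Z_{1,1},\hat Z_{1,2})$ and $(\hat Z_{2,1},\hat Z_{2,2})$, and solves \eqref{eqn:hessian_sketch} with $S=S_j$ where $j$ minimizes $\hat Z_{j,2}/\hat Z_{j,1}$. Applying Lemma~\ref{lem:z_estimation} to each of $S_1,S_2$ (both full-rank-compatible with $O(d^2)$ rows) and a union bound, with probability at least $1-2(0.01)=0.98$ all four estimates obey the stated two-sided inequalities. On that event, for the selected $S=S_j$ one has $Z_1(S)\ge(1-\eta)\hat Z_{j,1}$ and $Z_2(S)\le(1+\eta)^2(\hat Z_{j,2}+3\eta)$, so $\frac{Z_2(S)}{Z_1(S)} \le \frac{(1+\eta)^2}{1-\eta}\big(\frac{\hat Z_{j,2}}{\hat Z_{j,1}}+\frac{3\eta}{\hat Z_{j,1}}\big)$; using $(1+\eta)^2 \le (1+\eta)^4(1-\eta)$ for $\eta<\tfrac13$ together with a constant lower bound $\hat Z_{j,1}\ge \tfrac34$ (so the additive $3\eta/\hat Z_{j,1}$ is absorbed into $4\eta$), this is at most $(1+\eta)^4\big(\frac{\hat Z_{j,2}}{\hat Z_{j,1}}+4\eta\big)$, and by the selection rule $\frac{\hat Z_{j,2}}{\hat Z_{j,1}}=\min\{\frac{\hat Z_{1,2}}{\hat Z_{1,1}},\frac{\hat Z_{2,2}}{\hat Z_{2,1}}\}$; chaining with the deterministic bound gives the accuracy claim. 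For the running time, forming $S_1A,S_2A$ costs $O(\nnz(A))$ since the sketches are \textsc{Count-Sketch}-type, the two $\Call{Estimate}{}$ calls cost $O(\nnz(A)\log\frac1\eta+\poly(\frac d\eta))$ by Lemma~\ref{lem:z_estimation}, and \eqref{eqn:hessian_sketch} is a convex program with an $O(d^2)\times d$ design matrix, solvable in $\poly(d)$ time (to an accuracy absorbed into the $\poly(d/\eta)$ term and into the slack of $(1+\eta)^4$); the total is $O(\nnz(A)\log\frac1\eta+\poly(\frac d\eta))$.

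The delicate point, and the one I would spend the most care on, is turning Lemma~\ref{lem:z_estimation}'s mixed multiplicative/additive slack into exactly the clean form $(1+\eta)^4(\min\{\cdot,\cdot\}+4\eta)$: since the additive $\pm3\eta$ error in $\hat Z_2$ gets divided by $\hat Z_1$, one needs $\hat Z_{j,1}$ (equivalently $Z_1(S_j)$) bounded below by an absolute constant for the sketch actually used. This holds in the only regime where the theorem is informative, because $Z_2(S)\ge|1-Z_1(S)|$ forces $Z_1(S)$ near $1$ whenever $Z_2(S)/Z_1(S)$ (hence the bound) is small; alternatively it can be guaranteed by keeping a fresh random \textsc{Count-Sketch} with $\Omega(d^2)$ rows among the candidates, which is a $(1\pm o(1))$-subspace embedding and thus has $Z_1=1\pm o(1)$. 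A minor bookkeeping point is that the theorem instantiates \eqref{eqn:hessian_sketch} with $y=b$; an arbitrary IHS starting point reduces to this via the substitution $w=x-x_t,\ y=b-Ax_t$.
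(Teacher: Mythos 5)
Your proposal follows essentially the same route as the paper's proof: the paper likewise combines the deterministic bound $\norm{A(\hat x - x^\ast)}_2 \le \frac{Z_2(S)}{Z_1(S)}\norm{Ax^\ast}_2$ (cited from Proposition 1 of \cite{pw16}, which you re-derive via the variational-inequality argument), the estimation guarantee of Lemma~\ref{lem:z_estimation} with a union bound giving probability $0.98$, absorption of the additive $3\eta/Z_1$ slack using a constant lower bound $Z_1\ge 3/4$, and the sparsity of the sketches plus a $\poly(d/\eta)$-time solve of the sketched problem (the paper cites \cite{CD19}) for the runtime. The only difference is bookkeeping on where the lower bound on $Z_1$ for the selected sketch comes from: the paper takes it from the event that the random sketch $S_2$ is a subspace embedding (folded into its union bound), whereas you argue it via $Z_2\ge\abs{1-Z_1}$ or the presence of the random candidate — the same delicate point, handled with comparable care.
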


Theorem~\ref{thm:ihs} suggests the following. If the ratio of the learned sketch $Z_2(S_1)/Z_1(S_1)$ is a constant smaller than that of the random sketch $Z_2(S_2)/Z_2(S_2)$ and $\eta$ is a constant fraction of the ratio gap, then $\hat Z_{1,2}/\hat Z_{1,1}$ is a constant smaller than $\hat Z_{2,2}/\hat Z_{2,1}$, which means that the procedure of IHS will converge faster with the learned sketch. In particular, if $S_i$ is a $(1+\eps_i)$-subspace embedding matrix for $\colspace(A)$ with $\eps_i < 1/3$ and $\eta < \gamma\min\{|\eps_1-\eps_2|,\eps_1,\eps_2\}$ for some small constant $\gamma > 0$, we have $Z_2(S_i)/Z_1(S_i) \leq 3\eps_i$ and the guarantee in Theorem~\ref{thm:ihs} becomes $\norm{A(\hat{x}-x^\ast)}_2\leq O(\min\{\eps_1,\eps_2\})\norm{Ax^\ast}_2$, that is, a better subspace embedding can lead to a faster convergence. Hence, if the learned sketch is a better subspace embedding than a random sketch, theoretically we can obtain a better convergence by setting $\eta$ small enough; in practice we shall observe this. 

Furthermore, if we know the indices of the rows of large leverage scores of $A$ and the assumptions in Corollary~\ref{cor:oracle_subspace_embedding} are satisfied, we can use $O(d^2/\eps^2)$ rows to obtain a $\big(1+O(\frac{\eps}{\sqrt{d}/\polylog(d/\eps)})\big)$-subspace embedding using Corollary~\ref{cor:oracle_subspace_embedding}, which is almost a $\sqrt{d}$-factor better than the usual guarantee of a random \textsc{Count-Sketch} matrix of the same dimension, leading to an algorithm of faster convergence.

%!TeX root = main.tex
\section{Hessian Regression}
\label{sec:hessian_regression}

 \begin{algorithm}[t]
 \caption{Fast Regression Solver for \eqref{eqn:hessian_minimization}}
 \label{alg:fast_regression}
 \begin{algorithmic}[1]
 	\State $S_1\gets $ learned sketch, $S_2\gets $ random sketch 
 	\State $(Q_i,R_i)\gets \Call{QR}{S_iA}$, $i=1,2$
 	\State $(\sigma_i,\sigma_{i}') \gets \Call{Eig}{AR_i^{-1}}$, $i=1,2$ \Comment{\Call{Eig}{B} returns estimates of $\sigma_{\max}(B)$ and $\sigma_{\min}(B)$}
 	\State $i^\ast\gets \min_{i=1,2} (\sigma_i/\sigma_i')$
 	\State $P\gets R_{i^\ast}^{-1}$
 	\State $\eta\gets 1/(\sigma_{i^\ast}^2+(\sigma_{i^\ast}')^2)$
 	\State $z_0\gets 0$
 	\While {$\norm{A^\top APz_t - y}_2\geq \eps\norm{y}_2$}
 		\State $z_{t+1}\gets z_t-\eta(P^\top\!\! A^\top\!\!AP)(P^\top\!\!  A^\top\!\!  A P z_t \! -\!  P^\top\!  y)$
 	\EndWhile
 	\State \Return $Pz_t$
 \end{algorithmic}
 \end{algorithm}

In this section, we consider the minimization problem
\begin{equation}\label{eqn:hessian_minimization}
\min_z \norm{A^\top A z - y}_2,
\end{equation}
which is used as a subroutine for the unconstrained convex optimization problem $\min_x f(x)$ with $A^\top A$ being the Hessian matrix $\nabla^2 f(x)$ (see~\eqref{eqn:newton_update}). Here $A\in \R^{n\times d}$, $y\in \R^d$, and we have access to $A$. We incorporate a learned sketch into the fast regression solver in \cite{BPSW20} and present the algorithm in Algorithm~\ref{alg:fast_regression}.

Here the subroutine $\Call{Eig}{B}$ applies a $(1+\eta)$-subspace embedding sketch $T$ to $B$ for some small constant $\eta$ and returns $\sigma_{\max}(TB)$ and $\sigma_{\min}(TB)$. Since $B$ admits the form of $AR$, the sketched matrix $TB$ can be calculated as $(TA)R$ and thus can be computed in $O(\nnz(A) + \poly(d))$ time if $T$ is a \textsc{Count-Sketch} matrix of $O(d^2)$ rows. The extreme singular values of $TB$ can be found by SVD or the Lanczos algorithm.

Similar to Lemma 4.2 in~\cite{BPSW20}, we have the following guarantee of Algorithm~\ref{alg:fast_regression}. The proof parallels the proof in~\cite{BPSW20} and is postponed to Appendix~\ref{sec:hessian_regression_proof}.
\begin{theorem}\label{thm:hessian_regression}
Suppose that $S_1$ and $S_2$ are both \textsc{Count-Sketch}-type sketches with $O(d^2)$ rows.
Algorithm~\ref{alg:fast_regression} returns a solution $x'$ such that $\|A^\top Ax' - y\|_2\leq \eps\norm{y}_2$ with probability at least $0.97$. The runtime is $O(\nnz(A)) + \tilde{O}(nd\cdot (\min\{\sigma_1/\sigma_1',\sigma_2/\sigma_2'\})^2 \cdot\log(\kappa(A)/\eps) + \poly(d))$.
\end{theorem}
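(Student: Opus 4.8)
The plan is to treat correctness and running time separately. \emph{Correctness is deterministic:} the \textbf{while}-loop of Algorithm~\ref{alg:fast_regression} exits only once $\norm{A^\top APz_t-y}_2<\eps\norm{y}_2$, and it returns $x'=Pz_t$, so every output obeys $\norm{A^\top Ax'-y}_2\le\eps\norm{y}_2$. Everything else is the running-time bound, whose only nontrivial ingredient is a bound on the number of loop iterations. I would split the cost into (i) one-time preprocessing and (ii) the loop, which costs $O(nd)$ per iteration (applying $A,A^\top$ and the $d\times d$ matrices $P,P^\top$, plus the residual test, each $O(nd)$ or $O(d^2)$). For (i): each $S_i$ is \textsc{Count-Sketch}-type with $O(d^2)$ rows, so $S_iA$ is formed in $O(\nnz(A))$ time and is $O(d^2)\times d$; its $\Call{QR}{}$ factorization, and the extreme-singular-value computation in $\Call{Eig}{AR_i^{-1}}$ (carried out on $(TA)R_i^{-1}$ for a \textsc{Count-Sketch} subspace embedding $T$ of $O(d^2)$ rows, formed in $O(\nnz(A))+\poly(d)$ time), all cost $\poly(d)$. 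I would then condition on the constant-probability event, boostable by repetition, that $S_2$ and the two copies of $T$ are $(1\pm\gamma)$-subspace embeddings of the relevant column spaces for a small absolute constant $\gamma$; this makes $R_2$ invertible (so a usable preconditioner always exists) and makes $\kappa(AR_2^{-1})=O(1)$.

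Next I would identify the inner loop as a preconditioned first-order iteration and pin down its contraction rate. Because $\Call{QR}{S_iA}$ yields $S_iAR_i^{-1}=Q_i$ with orthonormal columns, the subspace-embedding property of $T$ gives $\sigma_i=(1\pm\gamma)\sigma_{\max}(AR_i^{-1})$ and $\sigma_i'=(1\pm\gamma)\sigma_{\min}(AR_i^{-1})$, hence $\sigma_i/\sigma_i'=\Theta(\kappa(AR_i^{-1}))$; the test ``$\sigma_1/\sigma_1'<\sigma_2/\sigma_2'$'' then keeps the better-conditioned preconditioner, so writing $P$ for the selected $R_i^{-1}$ and $B=AP$ we have $\kappa(B)=\Theta(\min\{\sigma_1/\sigma_1',\sigma_2/\sigma_2'\})=O(1)$. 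The substitution $z=Pw$ turns $A^\top Az=y$ into $B^\top Bw=P^\top y$, whose solution $w$ satisfies $Pw=(A^\top A)^{-1}y$, the exact minimizer; the loop is a preconditioned first-order iteration for this system, and with the step size $\eta=1/(\sigma_i^2+(\sigma_i')^2)$ — equal up to the $\gamma$-distortion to $1/(\sigma_{\max}(B^\top B)+\sigma_{\min}(B^\top B))$, hence in the stable range — standard analysis gives per-step error contraction by a factor $1-\Omega(\kappa(B)^{-2})$.

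To convert contraction into an iteration count I would use that $z_0=0$ makes the initial residual $\norm{A^\top APz_0-y}_2$ exactly $\norm{y}_2$, and bound the conversion between the natural error of the iteration (the error in $w$) and the residual $\norm{A^\top APz_t-y}_2$ tested by the stopping rule: via $\sigma_{\max}(A^\top AR^{-1})\le\sigma_{\max}(A)\sigma_{\max}(B)$ and $\sigma_{\min}(A^\top AR^{-1})\ge\sigma_{\min}(A)\sigma_{\min}(B)$ (the latter using $\colspace(B)=\colspace(A)$), together with the cancellation of the scale of $B$, the ``distance to travel'' in the residual norm is at most $\poly(\kappa(A),\kappa(B))/\eps$. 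Thus the loop halts within $O(\kappa(B)^2\log(\kappa(A)/\eps))$ iterations, absorbing $\log\kappa(B)=O(1)$ into $\log\kappa(A)$. Multiplying by the $O(nd)$ per-iteration cost and adding the $O(\nnz(A))+\poly(d)$ preprocessing yields the stated $O(\nnz(A))+\tilde O(nd\cdot(\min\{\sigma_1/\sigma_1',\sigma_2/\sigma_2'\})^2\log(\kappa(A)/\eps)+\poly(d))$, the $\tilde O$ absorbing logarithmic factors from the eigenvalue routines. This is the template of Lemma~4.2 of \cite{BPSW20}; the only new ingredient is evaluating $\kappa(AR_i^{-1})$ for two candidate preconditioners and keeping the smaller.

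The step I expect to be the main obstacle is the residual-versus-error conversion: the iteration decays an error adapted to $B^\top B$, while the stopping rule tests $\norm{A^\top Ax_t-y}_2$, and these differ exactly through the conditioning of $A^\top A$ itself; the content is to check that only a $\log(\kappa(A)/\eps)$ factor, not a $\poly(\kappa(A))$ factor, enters the iteration count — the norm bookkeeping I would lift from \cite{BPSW20}. A secondary point is that $\eta$ is built from the \emph{approximate} extreme singular values returned by $\Call{Eig}{}$, so one must verify the resulting step size is still in the convergent regime; taking $\gamma$ a small enough constant makes the inflation at most a $1+O(\gamma)$ factor, costing only a constant in the contraction rate.
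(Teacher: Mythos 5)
Your proposal is correct and takes essentially the same route as the paper: reduce to the analysis of Lemma~B.1/4.2 in \cite{BPSW20}, with the only new ingredients being the sketched estimation of $\sigma_{\max}(AR_i^{-1}),\sigma_{\min}(AR_i^{-1})$ to set the step size, the selection of the better-conditioned of the two candidate preconditioners, and the $O(\nnz(A))+\poly(d)$ preprocessing cost of the \textsc{Count-Sketch}-type sketches. The paper's own proof is just a terser version of this, invoking the standard gradient-descent bound (p.~468 of \cite{boyd04}) with step size set from the estimated extreme singular values to get the $(\sigma_{\max}(AP)/\sigma_{\min}(AP))^2$ iteration count, and deferring the residual-versus-error conversion (the source of the $\log(\kappa(A)/\eps)$ factor) to Eq.~(11) of \cite{BPSW20} — exactly the two points you single out.
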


\begin{remark}
In Algorithm~\ref{alg:fast_regression}, $S_2$ can be chosen to be a subspace embedding matrix for $d$-dimensional subspaces,  in which case, $AR_2^{-1}$ has condition number close to $1$ (see, e.g., p38 of \cite{W14}) and the full algorithm would run faster than the trivial $O(nd^2)$-time solver to \eqref{eqn:hessian_minimization}.
\end{remark}

\begin{remark}
For the original unconstrained convex optimization problem $\min_x f(x)$, one can run the entire optimization procedure with learned sketches versus the entire optimization procedure with random sketches, compare the objective values at the end, and choose the better of the two. For least-squares, $f(x) = \frac{1}{2}\norm{Ax-b}_2^2$, and the value of $f(x)$ can be approximated efficiently by a sparse subspace embedding matrix in $O(\nnz(A) + \nnz(b) + \poly(d))$ time.
\end{remark}

%!TeX root = main.tex
\section{Experiments}
\label{sec:exp}
\textbf{Comparison.} We compare the learned sketch against three classical sketches: Gaussian, \textsc{Count-Sketch}, and SJLT (see Section~\ref{sec:prelim}) in all experiments. The quantity we compare is a certain error, defined individually for each problem, in each iteration of the IHS or the internal regression problem in fast regression. All of our experiments are conducted on a laptop with a 1.90GHz CPU and 16GB RAM. The offline training is done separately and the training of a single sketch matrix in our dataset can be finished within $5$ minutes using a single GPU. For the learned sketches with learned values of nonzero entries, we take an average over three independent trials; for all other sketches, we take an average over five independent trials. The details of the implementation are deferred to Appendix~\ref{sec:exp_parameter}.

We elaborate on the reason that the horizontal axes in the plots are in terms of iterations rather than in terms of runtime. The learned matrix $S$ is trained offline only once using the training data. It is not computed while solving the optimization problem on the test data. \textit{Hence, no additional computational cost is incurred in generating $S$ other than solving the iteration step using \textsc{Count-Sketch}.} Since Gaussian matrices and sparse JL transforms are denser than \textsc{Count-Sketch} matrices, they will be considerably slower in each round. Since we want to understand the convergence behavior, an iteration count is more revealing than an overall time bound. If our learned sketch performs no worse with respect to the total number of rounds (which our experiments show), then it has an even greater advantage in runtime. To substantiate this claim, we show in Section~\ref{sec:exp_nuclear} an error-versus-runtime plot for the task of matrix estimation with nuclear norm constraints.

\subsection{IHS Experiments: LASSO}
We define an instance of LASSO regression to be:
\begin{equation}\label{eqn:LASSO}
x^* = \argmin_{\normone{x} \le \lambda} \frac{1}{2} \normtwo{Ax - b}^2, 
\end{equation}
where $\lambda$ is a parameter. We use two real-world datasets:
\begin{itemize}[leftmargin=20pt,topsep=0pt,itemsep=0pt]
    \item \textbf{Electric}\footnote{\url{https://archive.ics.uci.edu/ml/datasets/ElectricityLoadDiagrams20112014}}: residential electric load measurements. Each row of the matrix corresponds to a different residence. Matrix columns are consecutive measurements from different times. $A_i \in \R^{370 \times 9}$, $b_i \in \R^{370 \times 1}$, and $|(A, b)_{\train}| =
320$, $|(A, b)_{\test}| = 80$. We set $\lambda = 15$.
    
    \item \textbf{Greenhouse gas} (GHG)\footnote{\url{https://archive.ics.uci.edu/ml/datasets/Greenhouse+Gas+Observing+Network}}: time series of measured greenhouse gas concentrations in the California atmosphere. Each $(A, b)$ corresponds to a different measurement location. $A_i \in \R^{327 \times 14}$, $b_i \in \R^{327 \times 1}$, and $|(A, b)_{\train}| =
400$, $|(A, b)_{\test}| = 100$. We set $\lambda = 30$.
\end{itemize}

\paragraph{Experiment Setting.} We choose $m=6d, 8d, 10d$ for both datasets. We consider the error $\frac{1}{2}\norm{Ax-b}_2^2-\frac12\norm{Ax^\ast-b}_2^2$. For the two datasets, we use both the methods proposed in Section~\ref{sec:learn_subspace_embedding}. For the heavy-row \text{Count-Sketch}, we allocate 30\% of the sketch space to the rows of heavy leverage score.

For the Electric dataset, each row represents a specific residence and the indices of the heavy rows do not vary much across the matrices in the training data. We select the heavy rows according to the number of times each row is heavy in the training data for the heavy rows. We also consider optimizing the non-zero values after identifying the heavy rows. For the GHG dataset, each row represents a specific time point and the heavy rows are not very concentrated. Nevertheless, we can find a superset of about 30\% of the rows that contains most of the heavy rows, based on the counts on the training data. Then we prune the superset by selecting the rows with the largest $\ell_2$ norms, subject to the dimension budget. This will incur an additional computational cost, but the time is almost the same as the time to read the sub-matrix of these rows, and it can be used in all iterations, so the time of this step is negligible compared to the total runtime. We might lose a small fraction of heavy rows, but it only negligibly affects the experiments. The distribution on the indices of the heavy rows over the dataset is discussed in Appendix~\ref{sec:heavy_distribution}.

\paragraph{Experimental Result.} We plot in a  logarithmic scale the mean errors of the two datasets in Figures~\ref{fig:ghg} and~\ref{fig:ele}. We see all methods display linear convergence, that is, letting $e_k$ denote the error in the $k$-th iteration, we have $e_k \approx \rho^k e_1$ for some convergence rate $\rho$. A smaller convergence rate implies a faster convergence. 

We calculate an estimated rate of convergence $\rho = (e_k/e_1)^{1/k}$ with $k=10$ for the GHG dataset, and with $k=7$ for the Electric dataset. For the GHG dataset, we can see that when the sketch size is small ($m=6d$), the gradient-based learned sketch has a rate of convergence that is 56\% of that of \textsc{Count-Sketch}, and the heavy-rows sketch has a convergence rate that is 86.9\%. When the sketch size is large ($m=10d$), the gradient-based learned sketch has a convergence rate that is 63.7\%, and the heavy-rows sketch is 82.1\%. 
For the Electric dataset, both sketches, especially the heavy-rows sketch, show significant improvements. When the sketch size is small, the combined-learned sketch has a convergence rate that is just 21.1\% of that of sparse JL, and when the sketch size is large, the combined-learned sketch has a smaller convergence rate that is just 15.4\%. 

\begin{figure}[tb]
\centering
%\begin{minipage}{0.33\textwidth}
    \includegraphics[clip,trim={20px 0 25px 30px},width=0.325\textwidth]{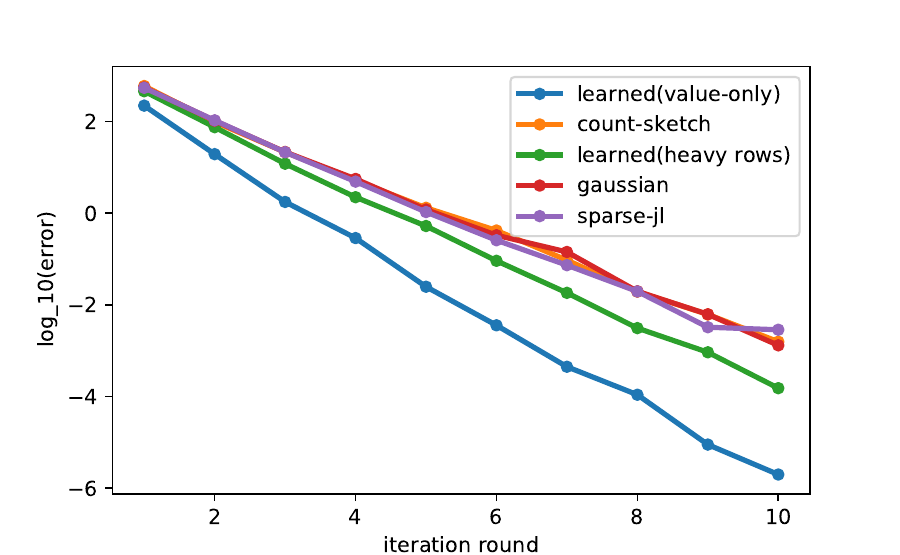}
%\end{minipage}
%\begin{minipage}{0.33\textwidth}
    \includegraphics[clip,trim={20px 0 25px 30px},width=0.32\textwidth]{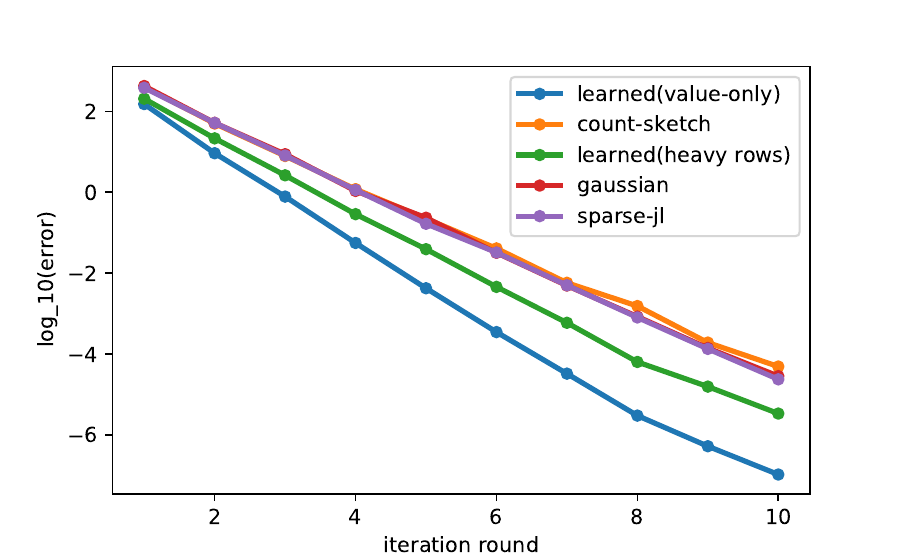}
%\end{minipage}
%\begin{minipage}{0.33\textwidth}
    \includegraphics[clip,trim={20px 0 25px 30px},width=0.32\textwidth]{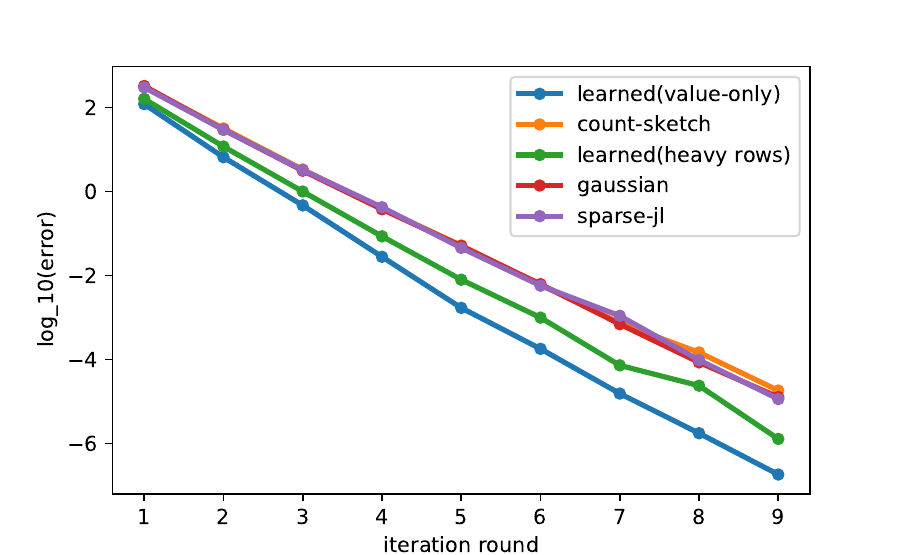}
%\end{minipage}
\vspace{-10pt}
    \caption{Test error of LASSO in the Green House Gas dataset.}
    \label{fig:ghg}
\end{figure}

\begin{figure}[tb]
\centering
%\begin{minipage}{0.33\textwidth}
    \includegraphics[clip,trim={15px 0 25px 30px},width=0.32\textwidth]{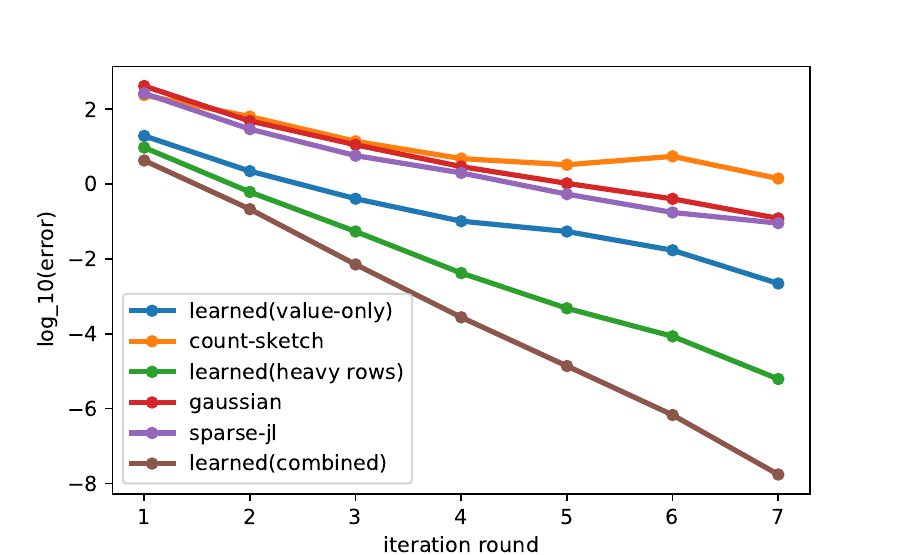}
%\end{minipage}
%\begin{minipage}{0.33\textwidth}
    \includegraphics[clip,trim={10px 0 25px 30px},width=0.32\textwidth]{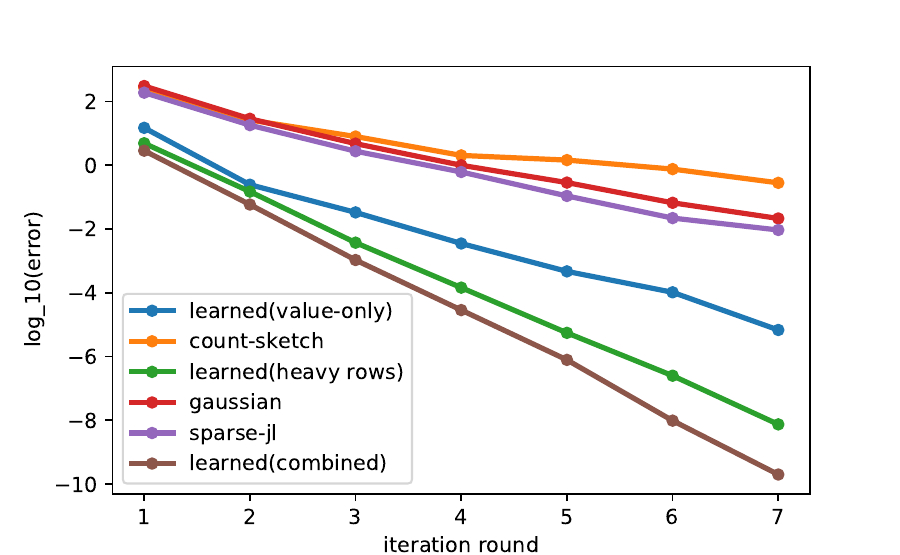}
%\end{minipage}
%\begin{minipage}{0.33\textwidth}
    \includegraphics[clip,trim={10px 0 25px 30px},width=0.32\textwidth]{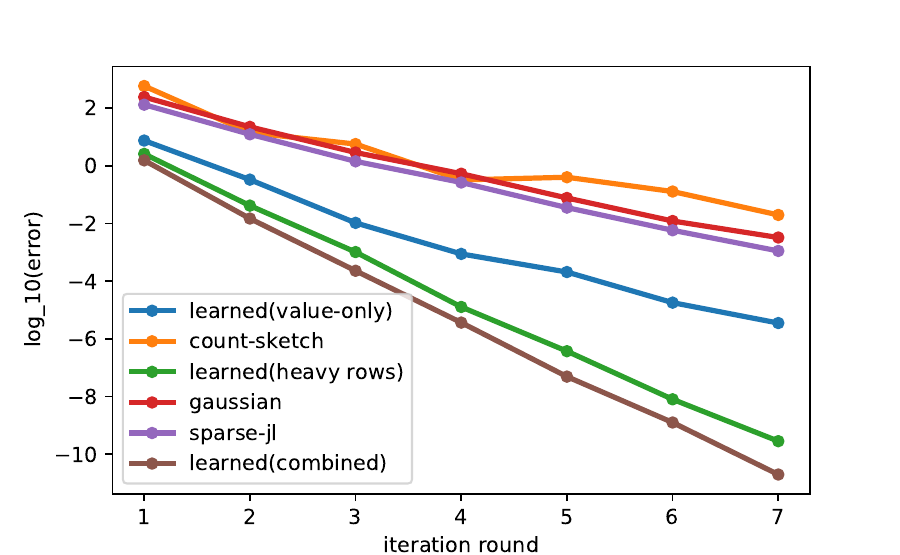}
%\end{minipage}
\vspace{-10pt}
\caption{Test error of LASSO in Electric dataset.}
\label{fig:ele}
\end{figure}

\subsection{IHS Experiments: Matrix Estimation with Nuclear Norm Constraint}\label{sec:exp_nuclear}

In many applications, for the problem
\[
X^*:= \argmin_{X \in \R^{d_1 \times d_2}} \fnorm{AX - B}^2 \;,  
\]
it is reasonable to model the matrix $X^*$ as having low rank. Similar to the $\ell_1$-minimization for compressive sensing, a standard relaxation of the rank constraint is to minimize the nuclear norm of $X$, defined as $\norm{X}_{\ast} := \sum_{j = 1}^{\min\{d_1, d_2\}} \sigma_j(X)$, where $\sigma_j(X)$ is the $j$-th largest singular value of $X$.

Hence, the matrix estimation problem we consider here is
 \[
 X^*:= \argmin_{X \in \R^{d_1 \times d_2}} \fnorm{AX - B}^2  \quad \text{such that}\quad \norm{X}_{\ast} \le \rho,
 \]
where $\rho > 0$ is a user-defined radius as a regularization parameter.

We conduct experiments on the following datasets:

\begin{itemize}[leftmargin=20pt,topsep=0pt,itemsep=0pt]
    \item \textbf{Tunnel}\footnote{{\url{https://archive.ics.uci.edu/ml/datasets/Gas+sensor+array+exposed+to+turbulent+gas+mixtures}}}: The data set is a time series of gas concentrations measured by eight sensors in a wind tunnel. Each $(A, B)$ corresponds to a different data collection trial. $A_i \in \R^{13530 \times 5}, B_i \in \R^{13530 \times 6}$,
    $|(A,B)|_{\train} = 144$, $|(A, B)|_{\test} = 36$. In our nuclear norm constraint, we set $\rho = 10$. 
\end{itemize}

\paragraph{Experiment Setting.} We choose $m=7d, 10d$ for the Tunnel dataset. We consider the error $\frac{1}{2}\norm{AX-B}_2^2-\frac12\norm{AX^\ast-B}_2^2$. The leverage scores of this dataset are very uniform. Hence, for this experiment we only consider optimizing the values of the non-zero entries.

\paragraph{Results of Our Experiments.} We plot in a logarithmic scale the mean errors of the two datasets in Figures~\ref{fig:tunnel}. We can see that when $m = 7d$, the gradient-based sketch, based on the first $6$ iterations, has a rate of convergence that is 48\% of the random sketch, and when $m = 10d$, the gradient-based sketch has a rate of convergence that is 29\% of the random sketch.

\begin{figure}[tb]
\begin{minipage}{0.47\textwidth}
    \centering
    \includegraphics[width=0.9\textwidth]{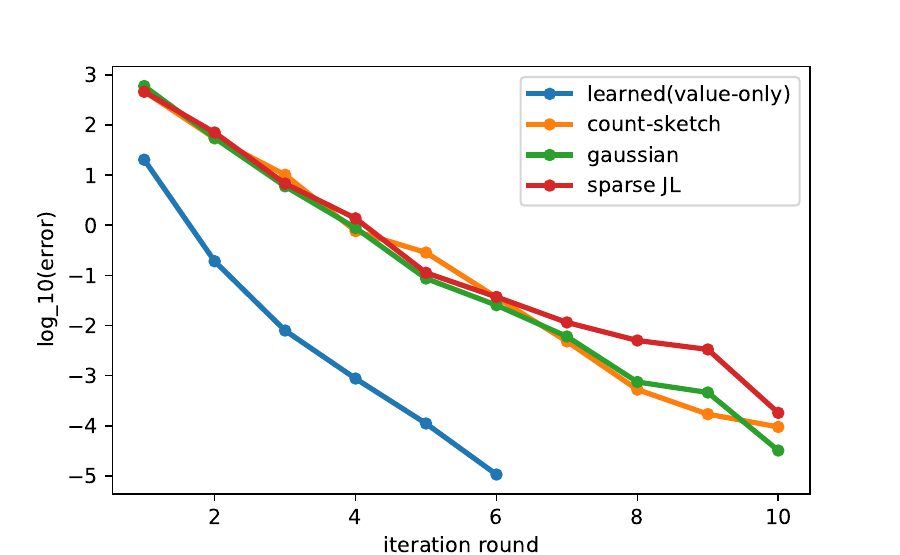} % 0,87
    
\end{minipage}
\hfill
\begin{minipage}{0.47\textwidth}
    \centering
    \includegraphics[width=0.9\textwidth]{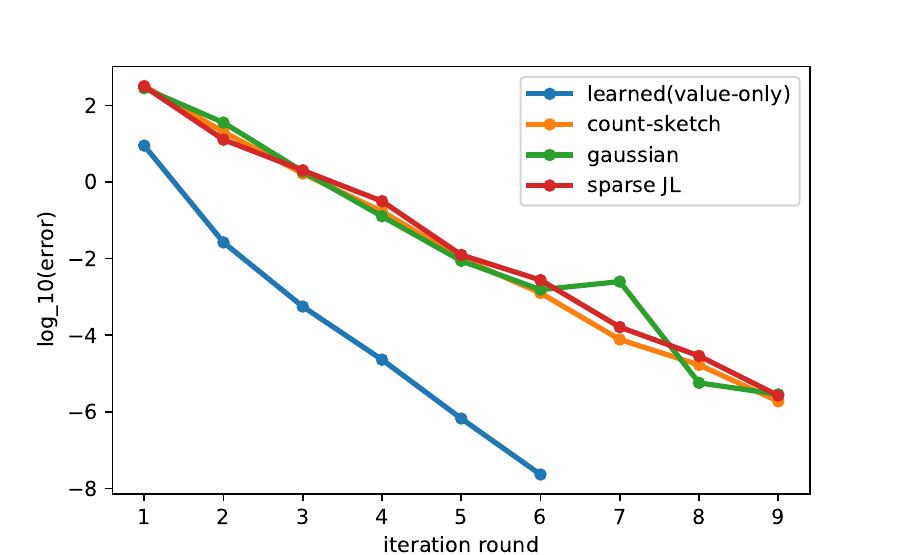}
    
\end{minipage}
\caption{Test error of matrix estimation with nuclear norm constraint on Tunnel dataset}\label{fig:tunnel}
\end{figure}

\paragraph{Runtime of Learned Sketch.} As stated in Section~\ref{sec:prelim}, our learned sketch matrices $S$ are all \textsc{Count-Sketch}-type matrices (each column contains a single nonzero entry), the matrix product $SA$ can thus be computed in $O(\nnz(A))$ time and the overall algorithm is expected to be fast. To verify this, we plot in an error-versus-runtime plot for matrix estimation with nuclear norm constraint tasks with $m = 10d$ in Figures~\ref{fig:time} (corresponding to the datasets in Figure~\ref{fig:tunnel}). The runtime consists only of the time for sketching and solving the optimization problem and does not include the time for loading the data. We run the same experiment three times. Each time we take an average over all test data. From the plot we can observe that the learned sketch and \textsc{Count-Sketch} have the fastest runtimes, which are slightly faster than that of the SJLT and significantly faster than that of the Gaussian sketch.

\begin{figure}[tb]
    \centering
    \includegraphics[width=0.47\textwidth]{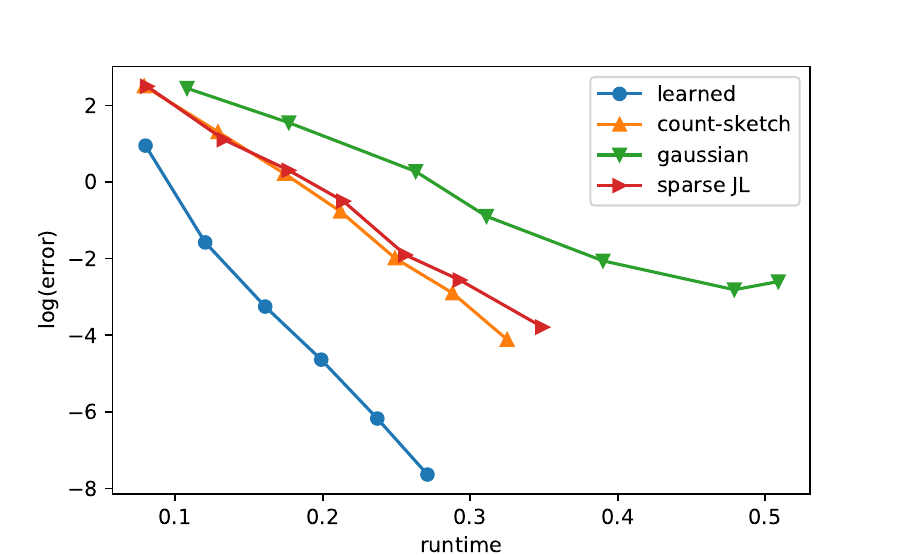} 
\caption{Test error of matrix estimation with nuclear norm constraint on Tunnel dataset}\label{fig:time}
\end{figure}

\subsection{Fast Regression Experiments}

We consider the unconstrained least squares problem $\min_x f(x)$ with $f(x) = \frac{1}{2}\norm{Ax-b}_2^2$
using the Electric dataset.

\paragraph{Training.} Note that $\nabla^2 f(x) = A^\top A$, independent of $x$. In the $t$-th round of Newton's method, by \eqref{eqn:newton_update}, we need to solve a regression problem $\min_z \norm{A^\top A z - y}_2^2$ with $y = \nabla f(x_t)$. Hence, we can use the same two methods in the preceding subsection to optimize the learned sketch $S_i$. For a general problem where $\nabla^2 f(x)$ depends on $x$, one can take $x_t$ to be the solution obtained from Algorithm~\ref{alg:fast_regression} using the learned sketch $S_t$ to generate $A$ and $y$ for the $(t+1)$-st round, train a learned sketch $S_{t+1}$, and repeat this process.

\paragraph{Setup for Experiments.} For the Electric dataset, we set $m = 10d = 90$. We compare the heavy-rows \textsc{Count-Sketch} matrix with the three classical random sketches, \textsc{Count-Sketch}, Gaussian and Sparse-JL. For the parameter $\eta$ in  Algorithm~\ref{alg:fast_regression}, we set $\eta = 1$ in all iterations for heavy-rows sketches. For the classical random sketches, we set $\eta$ in the following two ways: (a) $\eta = 1$ in all iterations and (b) $\eta = 1$ in the first iteration and $\eta = 0.2$ in all subsequent iterations.

\paragraph{Experimental Results.} We examine the accuracy of the subproblem~\eqref{eqn:hessian_minimization} and define the error to be $\normtwo{A^\top A Rz_t - y} / \normtwo{y}$. We consider the subproblems in the first three iterations of the global Newton method. The results are plotted in Figure~\ref{fig:fr_ele}. In this task, the \textsc{Count-Sketch} causes a terrible divergence of the subroutine and is thus omitted in the plots. Still, we observe that in setting (a) of $\eta$, the other two classical sketches cause the subroutine to diverge. In setting (b) of $\eta$, the other two classical sketches lead to convergence but their error is significantly larger than that of the heavy-rows sketches, in each of the first three calls to the subroutine. The error of the heavy-rows sketch is less than $0.01$ in all iterations of all three subroutine calls, in both setting (a) and (b) of $\eta$.

\begin{figure}[tb]
\centering
%\begin{minipage}{0.33\textwidth}
    \includegraphics[clip,trim={20px 0 25px 30px},width=0.325\textwidth]{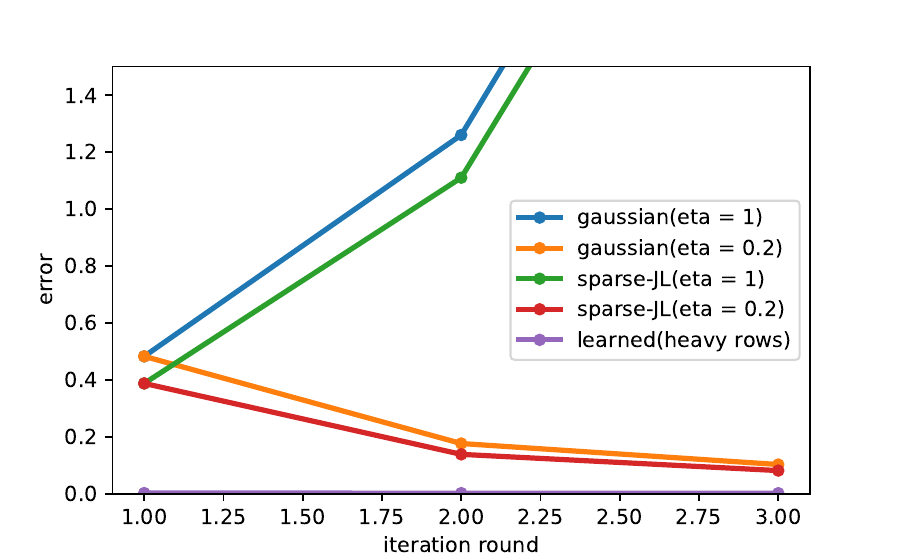}
%\end{minipage}
%\begin{minipage}{0.33\textwidth}
    \includegraphics[clip,trim={20px 0 25px 30px},width=0.32\textwidth]{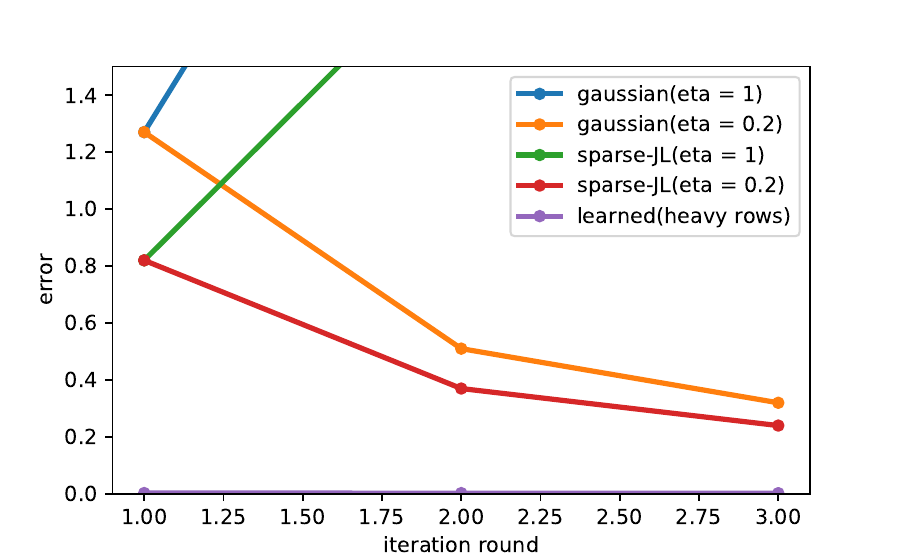}
%\end{minipage}
%\begin{minipage}{0.33\textwidth}
    \includegraphics[clip,trim={20px 0 25px 30px},width=0.32\textwidth]{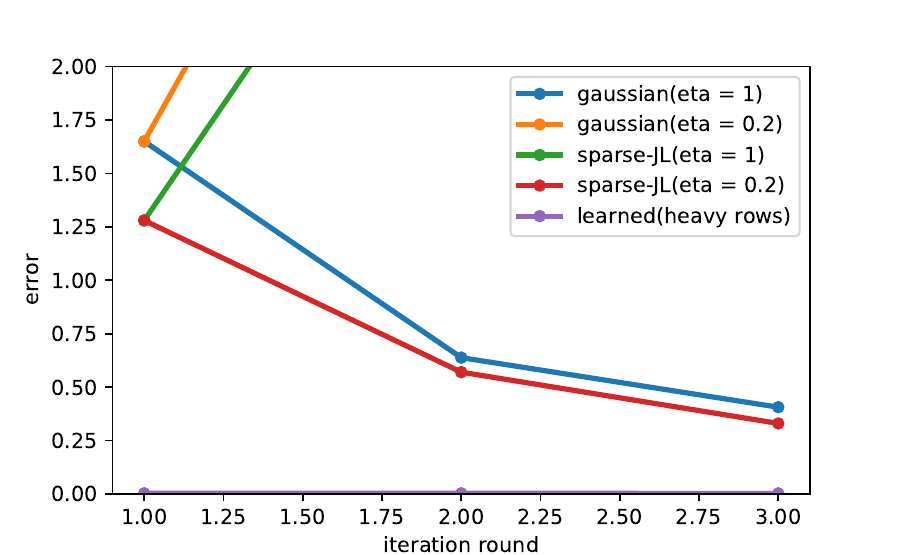}
%\end{minipage}
\vspace{-10pt}
    \caption{Test error of the subroutine in fast regression on Electric dataset.}
    \label{fig:fr_ele}
\end{figure}

We also plot a figure on the convergence of the global Newton method. Here, for each subroutine, we only run one iteration, and plot the error of the original least squares problem. The result is shown in Figure~\ref{fig:fr_ls}, which clearly displays a significantly faster decay with heavy-rows sketches. The rate of convergence using heavy-rows sketches is $80.6\%$ of that using Gaussian or sparse JL sketches.

\begin{figure}[tb]
\centering
    \includegraphics[clip,trim={20px 0 25px 30px},width=0.33\textwidth]{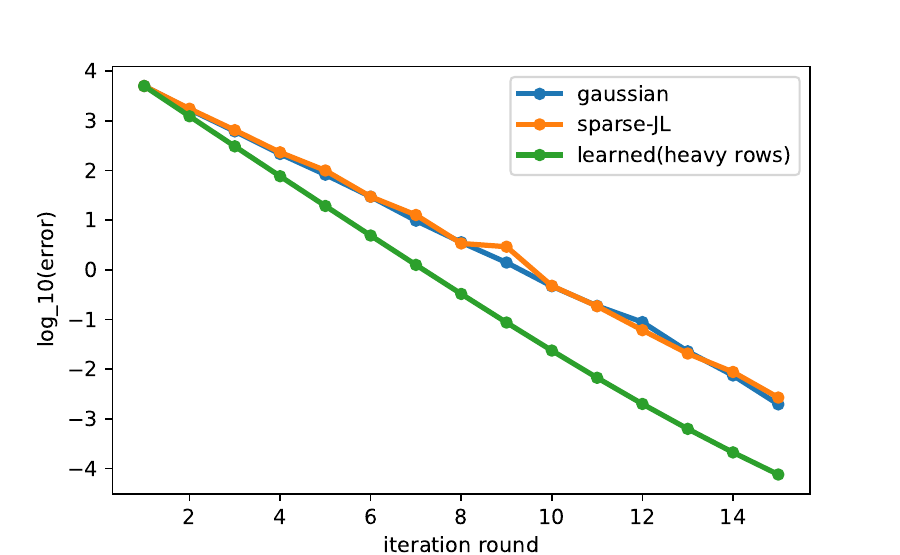}
    \caption{Test error of fast regression on Electric dataset}
    \label{fig:fr_ls}
\end{figure}

\section{Conclusion}
We demonstrated the superiority of using learned sketches over classical random sketches, for the Iterative Hessian Sketch method which is used for a number of problems in convex optimization. Compared with random sketches, our learned sketches of the same size yield considerably faster convergence.
We also provably show a better subspace embedding property of a sketch of the same size given an oracle for predicting a superset of rows with large leverage score. 
Our experiments show the construction of such an oracle is possible for real data sets, and they demonstrate a significant advantage over non-learned sketches for problems in convex optimization.

\bibliography{reference}
\bibliographystyle{alpha}

\clearpage

\appendix
%!TeX root = main.tex

\section{Proof of Theorem~\ref{thm:oracle_subspace_embedding}}\label{sec:embedding_oracle}
First we prove the following lemma.

\begin{lemma}\label{lem:oracle_lemma}
Let $\delta\in (0,1/m]$. It holds with probability at least $1-\delta$ that
\[
\sup_{x\in\colspace(A)} \abs{\norm{Sx}_2^2 - \norm{x}_2^2} \leq \eps\norm{x}_2^2,
\]
provided that
\begin{gather*}
 m \gtrsim \eps^{-2}((d+\log m)\min\{\log^2(d/\eps),\log^2 m\} + d\log(1/\delta)), \\
 1 \gtrsim \eps^{-2}\nu((\log m)\min\{\log^2(d/\eps),\log^2 m\}+\log(1/\delta))\log(1/\delta).
\end{gather*}
\end{lemma}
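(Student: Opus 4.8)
The plan is to reduce the statement to a spectral approximate matrix multiplication bound for \textsc{Count-Sketch} applied to a matrix whose rows all have small leverage score, and then establish that bound by the trace--power (moment) method that is standard for analyzing sparse subspace embeddings.

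First I would exploit the block structure of $S$. Let $U\in\R^{n\times d}$ have orthonormal columns spanning $\colspace(A)$, so every $x\in\colspace(A)$ is $x=Uz$ with $\norm{x}_2=\norm{z}_2$; partition the rows of $U$ into those indexed by $I$ and those indexed by $I^c$, giving submatrices $U_I$ and $U_{I^c}$. Since $S$ is block diagonal with blocks $I_{|I|}$ and $S'$, we have $\norm{Sx}_2^2-\norm{x}_2^2=\norm{S'U_{I^c}z}_2^2-\norm{U_{I^c}z}_2^2=z^\top U_{I^c}^\top((S')^\top S'-I)U_{I^c}z$, and $\norm{U_{I^c}z}_2^2\le\norm{z}_2^2=\norm{x}_2^2$. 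Hence it suffices to show, with probability at least $1-\delta$, that $\norm{M^\top((S')^\top S'-I)M}_{op}\le\eps$ where $M:=U_{I^c}$. The point is that $M$ has an extremely favorable profile: $\norm{M}_{op}\le\norm{U}_{op}=1$ (since $M^\top M=I-U_I^\top U_I\preceq I$), $\fnorm{M}^2=\sum_{i\in I^c}\tau_i(A)\le\sum_i\tau_i(A)=d$, and, by the very definition of $I$, every row $u_i:=M_i^\top$ of $M$ has $\norm{u_i}_2^2=\tau_i(A)<\nu$. This reduction is lossless, because the quantity we ultimately need is multiplicative in $\norm{x}_2^2$ anyway.

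Next I would expand the target as a sign--hash chaos. With $h$ the hash function and $\sigma$ the sign vector of $S'$, the matrix $(S')^\top S'$ has unit diagonal, so $(S')^\top S'-I$ has zero diagonal and
\[
\Xi:=M^\top((S')^\top S'-I)M=\sum_{i\ne j}\mathbf{1}[h(i)=h(j)]\,\sigma_i\sigma_j\,u_iu_j^\top,
\]
a symmetric random matrix. I would bound $\E\norm{\Xi}_{op}^{2\ell}\le\E\tr(\Xi^{2\ell})$ by the standard combinatorial route used for sparse oblivious subspace embeddings in \cite{CW17,NN13,Cohen16}: expand the trace into a sum over closed walks on the index set $I^c$, integrate out $\sigma$ (which forces each row label to appear with even multiplicity, imposing a pairing structure on the walk), then integrate out $h$ (each collision constraint $h(i)=h(j)$ not already implied costs a factor $1/m$). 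The outcome is a bound of the schematic form $\big(\text{poly}(\ell)\cdot\fnorm{M}^2\norm{M}_{op}^2/m+\text{poly}(\ell)\cdot(\nu\norm{M}_{op}^2+\nu\fnorm{M}^2/m)+\cdots\big)^{2\ell}$: a ``variance'' term governed by $\fnorm{M}^2/m\le d/m$ and a ``leverage'' term governed by $\nu$; there are two complementary ways to carry out the walk-counting, one introducing $\log^2(d/\eps)$-type factors and one introducing $\log^2 m$-type factors, and I would keep the smaller, which yields the $\min\{\log^2(d/\eps),\log^2 m\}$ in the statement. It is also convenient to precede this with a ``flattening'' step conditioning on the hash spreading the rows of $M$ evenly across the $m$ buckets — this fails with probability at most $\delta$ and is where the extra $\log(1/\delta)$ factor in the $\nu$-condition arises. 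Finally, taking $\ell\asymp\log(1/\delta)$ and applying Markov's inequality, requiring the variance term to be $\lesssim\eps$ gives exactly the first displayed condition $m\gtrsim\eps^{-2}((d+\log m)\min\{\log^2(d/\eps),\log^2 m\}+d\log(1/\delta))$, and requiring the leverage term to be $\lesssim\eps$ gives the second condition on $\nu$.

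The main obstacle is precisely this combinatorial moment computation: bounding the number of sign-balanced closed walks, weighting them by the hash-collision constraints, and tracking the dependence on $\nu$, $m$, $d$, and $\ell$ sharply enough to obtain the two stated inequalities rather than a lossy form of them — this is the technically heavy core of every sparse-embedding argument, and the whole gain here comes from exploiting $\max_i\norm{u_i}_2^2<\nu$ in place of the trivial $\norm{u_i}_2^2\le 1$. Everything after that is bookkeeping; in particular Theorem~\ref{lem:oracle_subspace_embedding} follows by substituting $\nu=\eps/d$, which satisfies the $\nu$-condition exactly under the hypothesis $d=\Omega((1/\eps)\polylog(1/\eps)\log^2(1/\delta))$, and noting that $\delta=O(\eps^2/(d\polylog(d/\eps)))$ forces $\log(1/\delta)=O(\log(d/\eps))$, so the $m$-bound collapses to $O((d\polylog(1/\eps)+\log(1/\delta))/\eps^2)$.
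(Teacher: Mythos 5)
Your opening reduction is sound and in fact mirrors the paper's first step: since $S$ is block diagonal, $\norm{Sx}_2^2-\norm{x}_2^2=\norm{S'x_{I^c}}_2^2-\norm{x_{I^c}}_2^2$, so everything hinges on a spectral bound for $U_{I^c}^\top((S')^\top S'-I)U_{I^c}$, where the surviving rows have leverage scores (row norms squared) below $\nu$ and total Frobenius mass at most $d$. The divergence, and the problem, is what comes next. The paper does not prove the lemma by a trace--power computation; it proves it by adapting the chaining-based proof of Theorem~5 of [BDN15] (decoupling, the seminorm $\norm{x}_\delta=\max_i(\sum_{j\in I^c}\delta_{ij}x_j^2)^{1/2}$, suprema-of-chaos and entropy-integral estimates), checking line by line that every inequality survives when sums over $[n]$ are restricted to $I^c$ and using $\max_{j\in I^c}\norm{P_E e_j}_2^2\leq\nu$ exactly where [BDN15] uses the sparsity parameter. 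The peculiar shape of the hypotheses --- the factor $\min\{\log^2(d/\eps),\log^2 m\}$, the additive $d\log(1/\delta)$, and the $\log(1/\delta)$ appearing twice in the $\nu$-condition --- is inherited verbatim from that chaining analysis (two different covering-number bounds feeding a Dudley-type integral), not from any moment calculation.

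This is where your proposal has a genuine gap: the entire technical core is the assertion that a closed-walk/trace-moment expansion of $\E\tr(\Xi^{2\ell})$ with $\ell\asymp\log(1/\delta)$ ``yields exactly'' the two displayed conditions, but no such computation is carried out, and it is far from clear it would produce these bounds. The standard $s=1$ \textsc{Count-Sketch} moment analyses (Clarkson--Woodruff, Meng--Mahoney, Nelson--Nguyen) give $m\asymp d^2/(\eps^2\delta)$, and the known moment/trace-inequality improvements (e.g.\ Cohen's) require sparsity $s>1$; there is no standard walk-counting argument that, exploiting only $\max_i\norm{u_i}_2^2\leq\nu$, recovers the $\min\{\log^2(d/\eps),\log^2 m\}$ factor or the precise placement of the $\log(1/\delta)$ terms, and your ``two complementary ways to carry out the walk-counting'' and the ``flattening step costing $\log(1/\delta)$'' are sketched at a level that amounts to reverse-engineering the answer rather than deriving it. As written, the proposal is a plausible research plan whose missing middle is precisely the lemma to be proven; to make it a proof you would either have to execute that combinatorial estimate in full (and verify it really closes with these constants and logarithms), or do what the paper does and port the [BDN15] chaining argument, which is the route for which the stated hypotheses are tailor-made.
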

\begin{proof}
We shall adapt the proof of Theorem 5 in~\cite{BDN15} to our setting. Let $T$ denote the unit sphere in $\colspace(A)$ and set the sparsity parameter $s=1$. Observe that $\norm{Sx}_2^2 = \norm{x_I}_2^2 + \norm{Sx_{I^c}}_2^2$, and so it suffices to show that
\[
\Pr\left\{ \abs{\norm{S'x_{I^c}}_2^2 - \norm{x_{I^c}}_2^2} > \eps \right\} \leq \delta
\]
for $x\in T$. We make the following definition, as in (2.6) of~\cite{BDN15}: 
\[
A_{\delta,x} := \sum_{i=1}^m \sum_{j\in I^c} \delta_{ij}x_j e_i\otimes e_{j},
\]
and thus, $S'x_{I^c} = A_{\delta,x}\sigma$. Also by $\E\norm{S'x_{I^c}}_2^2 = \norm{x_{I^c}}_2^2$, one has
\begin{equation}\label{eqn:oracle_basic}
\sup_{x\in T}\abs{\norm{S'x_{I^c}}_2^2 - \norm{x_{I^c}}_2^2} = \sup_{x\in T} \abs{ \norm{A_{\delta,x}\sigma}_2^2 - \E \norm{A_{\delta,x}\sigma}_2^2  }.
\end{equation}
Now, in (2.7) of~\cite{BDN15} we instead define a seminorm
\[
\norm{x}_\delta = \max_{1\leq i\leq m}\left(\sum_{j\in I^c} \delta_{ij} x_j^2\right)^{1/2}.
\]
Then (2.8) continues to hold, and (2.9) as well as (2.10) continue to hold if the supremum in the left-hand side is replaced with the left-hand side of~\eqref{eqn:oracle_basic}. At the beginning of Theorem 5, we define $U^{(i)}$ to be $U$, but each row $j\in I^c$ is multiplied by $\delta_{ij}$ and each row $j\in I$ is zeroed out. Then we have in the first step of (4.5) that 
\[
\sum_{j\in I^c} \delta_{ij} \abs{\sum_{k=1}^d g_k\langle f_k, e_j\rangle}^2 \leq \norm{U^{(i)}g}_2^2, 
\]
instead of equality. One can verify that the rest of (4.5) goes through. It remains true that $\norm{\cdot}_\delta \leq (1/\sqrt{s})\norm{\cdot}_2$, and thus (4.6) holds. One can verify that the rest of the proof of Theorem 5 in~\cite{BDN15} continues to hold if we replace $\sum_{j=1}^n$ with $\sum_{j\in I^c}$ and $\max_{1\leq j\leq n}$ with $\max_{j\in I^c}$, noting that
\[
\E\sum_{j\in I^c}\delta_{ij}\norm{P_E e_j}_2^2 = \frac{s}{m}\sum_{j\in I^c}\langle P_E e_j,e_j\rangle \leq \frac{s}{m}d
\]
and
\[
\E (U^{(i)})^\ast U^{(i)} = \sum_{j\in I^c}(\E\delta_{ij}) u_j u_j^\ast \preceq \frac{1}{m}.
\]
Thus, the symmetrization inequalities on 
\[
\norm{\sum_{j\in I^c} \delta_{ij}\norm{P_E e_j}_2^2}_{L_\delta^p} \quad\text{and}\quad \norm{\sum_{j\in I^c}\delta_{ij}u_j u_j^\ast}_{L_\delta^p}
\]
continue to hold. The result then follows, observing that $\max_{j\in I^c} \norm{P_E e_j}^2\leq \nu$.
\end{proof}

The subspace embedding guarantee now follows as a corollary. 
\begin{reptheorem}{thm:oracle_subspace_embedding}
Let $\nu = \eps/d$. Suppose that
$m = \Omega((d/\eps^2)(\polylog(1/\eps) + \log(1/\delta)))$, $\delta \in (0,1/m)$ and $d = \Omega((1/\eps)\polylog(1/\eps)\log^2(1/\delta)) $. Then, there exists a distribution on $S$ with $m + |I|$ rows such that 
\[
\Pr\left\{ \forall x\in\colspace(A), \abs{\norm{Sx}_2^2 - \norm{x}_2^2} > \eps\norm{x}_2^2 \right\} \leq \delta.
\]
\end{reptheorem}

\begin{proof}
One can verify that the two conditions in  Lemma~\ref{lem:oracle_lemma} are satisfied if
\begin{gather*}
% \delta \leq \frac{1}{m}, \\
m \gtrsim \frac{d}{\eps^2}\left(\polylog(\frac{d}{\eps})+\log\frac{1}{\delta}\right), \\
d \gtrsim \frac{1}{\eps}\left(\log\frac{1}{\delta}\right)\left(\polylog(\frac{d}{\eps})+\log\frac{1}{\delta}\right).
\end{gather*}
The last condition is satisfied if
\[
d \gtrsim \frac{1}{\eps}\left(\log^2 \frac{1}{\delta}\right)\polylog\left(\frac{1}{\eps}\right). \qedhere
\]
%The first condition is satisfied if
%\[
%\delta \leq \frac{\eps^2}{d\polylog(\frac{d}{\eps})}. \qedhere
%\]
\end{proof}

\section{Proof of Lemma~\ref{lem:eps_subspace}}\label{sec:eps_subspace_proof}
\begin{proof}
On the one hand, since $Q = SAR$ is an orthogonal matrix, we have
\begin{equation}\label{eqn:eps_subspace_aux_1}
\norm{x}_2 = \norm{Qx}_2 = \norm{SARx}_2.
\end{equation}
On the other hand, the assumption implies that
\[
\norm{(ARx)^T(ARx) - x^T x}_2 \leq \eps\norm{x}_2^2,
\]
that is,
\begin{equation}\label{eqn:eps_subspace_aux_2}
(1-\eps)\norm{x}_2^2 \leq \norm{ARx}_2^2 \leq (1+\eps)\norm{x}_2^2.
\end{equation}
Combining both \eqref{eqn:eps_subspace_aux_1} and \eqref{eqn:eps_subspace_aux_2} leads to 
\[
\sqrt{1-\eps}\norm{SARx}_2 \leq \norm{ARx}_2 \leq \sqrt{1+\eps}\norm{SARx}_2,\quad \forall x\in\R^d
\]
Equivalently, it can be written as
\[
\frac{1}{\sqrt{1+\eps}}\norm{SAy}_2 \leq \norm{Ay}_2 \leq \frac{1}{\sqrt{1-\eps}}\norm{SAy}_2,\quad\forall y\in \R^d.
\]
The claimed result follows from the fact that $1/\sqrt{1+\eps}\geq 1-\eps$ and $1/\sqrt{1-\eps}\leq 1+\eps$ whenever $\eps\in(0,\frac{\sqrt{5}-1}{2}]$.
\end{proof}
%!TeX root = main.tex
\section{Proof of Lemma~\ref{lem:z_estimation}}\label{sec:z_estimation}

Suppose that $AR^{-1} = UW$, where $U\in\R^{n\times d}$ has orthonormal columns, which form an orthonormal basis of the column space of $A$. Since $T$ is a subspace embedding of the column space of $A$ with probability $0.99$, it holds for all $x\in\R^d$ that 
\[
\frac{1}{1+\eta}\norm{TAR^{-1}x}_2 \leq \norm{AR^{-1}x}_2 \leq \frac{1}{1-\eta}\norm{TAR^{-1}x}_2.
\]
Since \[
\norm{TAR^{-1}x}_2 = \norm{Qx}_2 = \norm{x}_2
\]
 and 
 \begin{equation}\label{eqn:ARinverse}
 \norm{Wx}_2 = \norm{UWx}_2 = \norm{AR^{-1}x}_2
 \end{equation}
we have that
\begin{equation}\label{eqn:W}
\frac{1}{1+\eta}\norm{x}_2 \leq \norm{Wx}_2\leq \frac{1}{1-\eta}\norm{x}_2,\quad x\in\R^d.
\end{equation}
It is easy to see that 
\[
Z_1(S) = \min_{x\in \bS^{d-1}} \norm{SUx}_2 = \min_{y\neq 0} \frac{\norm{SUWy}_2}{\norm{Wy}_2},
\]
and thus,
\[
\min_{y\neq 0} (1-\eta)\frac{\norm{SUWy}_2}{\norm{y}_2} \leq Z_1(S) \leq \min_{y\neq 0} (1+\eta)\frac{\norm{SUWy}_2}{\norm{y}_2}.
\]
Recall that $SUW=SAR^{-1}$. We see that
\[
(1-\eta)\sigma_{\min}(SAR^{-1})\leq Z_1(S)\leq (1+\eta)\sigma_{\min}(SAR^{-1}).
\]

By definition,
\[
Z_2(S) = \norm{U^T(S^\top S-I_n)U}_{\mathrm{op}}.
\]
It follows from~\eqref{eqn:W} that
\[
(1-\eta)^2\norm{W^T U^T (S^T S-I_n)UW}_{\mathrm{op}} \leq Z_2(S) \leq (1+\eta)^2\norm{W^T U^T (S^T S-I_n)UW}_{\mathrm{op}}.
\]
and from~\eqref{eqn:W}, \eqref{eqn:ARinverse} and Lemma 5.36 of \cite{V12} that
\[
\norm{(AR^{-1})^\top(AR^{-1})-I}_{\mathrm{op}}\leq 3\eta.
\]
Since 
\[
\norm{W^T U^T (S^T S-I_n)UW}_{\mathrm{op}} = \norm{(AR^{-1})^\top(S^TS-I_n)AR^{-1}}_{\mathrm{op}}
\]
and
\begin{align*}
&\quad\ \norm{(AR^{-1})^\top S^T SAR^{-1}-I}_{\mathrm{op}} - \norm{(AR^{-1})^\top(AR^{-1})-I}_{\mathrm{op}} \\
& \leq \norm{(AR^{-1})^\top(S^TS-I_n)AR^{-1}}_{\mathrm{op}} \\
& \leq \norm{(AR^{-1})^\top S^T SAR^{-1}-I}_{\mathrm{op}} + \norm{(AR^{-1})^\top(AR^{-1})-I}_{\mathrm{op}},
\end{align*}
it follows that
\begin{align*}
&\quad\ (1-\eta)^2\norm{(SAR^{-1})^\top SAR^{-1}-I}_{\mathrm{op}}-3(1-\eta)^2\eta \\
&\leq Z_2(S) \\
&\leq (1+\eta)^2\norm{(SAR^{-1})^\top SAR^{-1}-I}_{\mathrm{op}}+3(1+\eta)^2\eta.
\end{align*}

We have so far proved the correctness of the approximation and we shall analyze the runtime below.

Since $S$ and $T$ are sparse, computing $SA$ and $TA$ takes $O(\nnz(A))$ time. The QR decomposition of $TA$, which is a matrix of size $\poly(d/\eta)\times d$, can be computed in $\poly(d/\eta)$ time. The matrix $SAR^{-1}$ can be computed in $\poly(d)$ time. Since it has size $\poly(d/\eta)\times d$, its smallest singular value can be computed in $\poly(d/\eta)$ time. To approximate $Z_2(S)$, we can use the power method to estimate $\norm{(SAR^{-1})^T SAR^{-1}-I}_{op}$ up to a $(1\pm\eta)$-factor in $O((\nnz(A)+\poly(d/\eta))\log(1/\eta))$ time.

\section{Proof of Theorem~\ref{thm:ihs}}\label{sec:ihs_proof}

In Lemma~\ref{lem:z_estimation}, we have with probability at least $0.99$ that
\[
\frac{\hat{Z}_2}{\hat{Z}_1}\geq \frac{\frac{1}{(1+\eta)^2}Z_2(S)-3\eta}{\frac{1}{1-\eta}Z_1(S)} \geq \frac{1-\eta}{(1+\eta)^2}\frac{Z_2(S)}{Z_1(S)} - \frac{3\eta}{Z_1(S)}.
\]
When $S$ is random subspace embedding, it holds with probability at least $0.99$ that $Z_1(S)\geq 3/4$ and so, by a union bound, it holds with probability at least $0.98$ that
\[
\frac{\hat{Z}_2}{\hat{Z}_1}\geq \frac{1}{(1+\eta)^4} \frac{Z_2(S)}{Z_1(S)}-4\eta,
\]
or,
\[
\frac{Z_2(S)}{Z_1(S)}\leq (1+\eta)^4\left(\frac{\hat{Z}_2}{\hat{Z}_1}+4\eta\right).
\]
The correctness of our claim then follows from Proposition 1 of \cite{pw16}, together with the fact that $S_2$ is a random subspace embedding. The runtime follows from Lemma~\ref{lem:z_estimation} and Theorem 2.2 of \cite{CD19}.

%!TeX root = main.tex
\section{Proof of Theorem~\ref{thm:hessian_regression}}\label{sec:hessian_regression_proof}

The proof follows a similar argument to that in \cite[Lemma B.1]{BPSW20}. In \cite{BPSW20}, it is assumed (in our notation) that $3/4\leq \sigma_{\min}(AP)\leq \sigma_{\max}(AP)\leq 5/4$ and thus one can set $\eta=1$ in Algorithm~\ref{alg:fast_regression} and achieve a linear convergence. The only difference is that here we estimate $\sigma_{\min}(AP)$ and $\sigma_{\max}(AP)$ and set the step size $\eta$ in the gradient descent algorithm accordingly. By standard bounds for gradient descent (see, e.g., p468 of \cite{boyd04}), with a choice of step size $\eta=2/(\sigma_{\max}^2(AP)+\sigma_{\min}^2(AP))$, after $O((\sigma_{\max}(AP)/\sigma_{\min}(AP))^2\log(1/\eps))$ iterations, we can find $z_t$ such that
\[
\norm{P^\top A^\top AP(z_t-z^\ast)}_2 \leq \eps\norm{P^\top A^\top AP(z_0-z^\ast)}_2,
\]
where $z^\ast = \argmin_z \norm{P^\top A^\top APz-P^\top y}_2$ is the optimal least-squares solution. This establishes Eq.~(11) in the proof in \cite{BPSW20}, and the rest of the proof follows as in there.

We use three subspace embeddings here, $S_1$, $S_2$ and one used in the $\textsc{Eig}$ subrountine. Each  subspace embedding uses $O(d^2)$ rows with a constant distortion parameter and a failure probability of $0.01$. The overall failure probability is thus $0.03$.
\section{Heavy Leverage Score Rows Distribution over the Dataset}
\label{sec:heavy_distribution}
    In our experiments, we hypothesize that in real-world data that there may be an underlying pattern which can help us identify the heavy rows. In the Electric dataset, each row of the matrix corresponds to a specific residence and the heavy rows are always concentrated on some specific rows; in the GHG data set, each row corresponds to a specific time point and we can select some specific time points to be a superset of the heavy rows and then select the heavy rows based on their $\ell_2$-norm in this superset.
    
    To exemplify this, we study the heavy leverage score rows distribution over the Electirc dataset. For a row $i\in [370]$, let $f_i$ denote the times that row $i$ is heavy out of $320$ training data points from the Electric dataset, where we say row $i$ is heavy if $\ell_i \ge 5d/n$. Below we list all $74$ pairs $(i,f_i)$ with $f_i > 0$.

        (195,320), (278,320), (361,320), (207,317), (227,285), (240,284), (219,270), (275,232), (156,214), (322,213), (193,196), (190,192), (160,191), (350,181), (63,176), (42,168), (162,148), (356,129), (363,110), (362,105), (338,95), (215,94), (234,93), (289,81), (97,80), (146,70), (102,67), (98,58), (48,57), (349,53), (165,46), (101,41), (352,40), (293,34), (344,29), (268,21), (206,20), (217,20), (327,20), (340,19), (230,18), (359,18), (297,14), (357,14), (161,13), (245,10), (100,8), (85,6), (212,6), (313,6), (129,5), (130,5), (366,5), (103,4), (204,4), (246,4), (306,4), (138,3), (199,3), (222,3), (360,3), (87,2), (154,2), (209,2), (123,1), (189,1), (208,1), (214,1), (221,1), (224,1), (228,1), (309,1), (337,1), (343,1)
        
    Observe that the heavy rows are concentrated on a set of specific row indices. There are only $30$ rows $i$ with $f_i \geq 50$. We view this as strong evidence for our hypothesis. 
    
\section{Implementation Details}
\label{sec:exp_parameter}

As we state in Section~\ref{sec:optimizing_value}, when we fix the positions of the non-zero entries (uniformly chosen in each column or sampling according to the heavy leverage score distribution), we aim to optimize the values by gradient descent mentioned in Algorithm~\ref{alg:gd}. Here the loss function is given in Section~\ref{sec:optimizing_value}. In our implementation, we use PyTorch~(\cite{PyT19}), which can compute the gradient automatically (here we can use torch.qr() and torch.svd() to define our loss function). For a more nuanced loss function, which may be beneficial, one can use the package released in~\cite{A+19}, where the authors studied the problem of computing the gradient of functions which involve the solution to certain convex optimization problem. 

As mentioned in Section~\ref{sec:prelim}, each column of the sketch matrix $S$ has exact one non-zero entry. Hence, the $i$-th coordinate of $p$ can be seen as the non-zero position of the $i$-th column of $S$. In the implementation, to sample $p$ randomly, we can sample a random integer in $\{1, \dots, m\}$ for each coordinate of $p$. For the heavy rows mentioned in Section~\ref{sec:using_oracle}, we can allocate positions $1,\dots,k$ to the $k$ heavy rows, and for the other rows, we randomly sample an integer in $\{k + 1, \dots, m\}$. We note that once the vector $p$, which contains the information of the nonzero position in each column of $S$, is chosen, it will not be changed during the optimization process in Algorithm~\ref{alg:gd}.

Next, we introduce some parameters for our experiments.
\begin{itemize}
    \item $bs$: batch size, the number of training samples used in one iteration.
    \item $lr$: learning rate of the gradient descent(the $\alpha$ in Algorithm~\ref{alg:gd}).
    \item $iter$: the number of iteration for Algorithm~\ref{alg:gd}.
\end{itemize}

In our experiments, we set $bs = 20, iter = 1000$ for all dataset. We set $lr = 10$ for the Green House Gas dataset and $lr=0.1$ for the Electric dataset.

\end{document}